\newcommand{\wt}[1]{\widetilde{#1}}
\newcommand{\wh}[1]{\widehat{#1}}
\newcommand{\DQN}{\textsc{\small{DQN}}\xspace}
\newcommand{\DDQN}{\textsc{\small{DDQN}}\xspace}
\newcommand{\GAN}{\textsc{\small{GAN}}\xspace}
\newcommand{\piroll}{\pi_{r}}
\newcommand{\GDM}{\textsc{\small{GDM}}\xspace}
\newcommand{\RP}{\textsc{\small{RP}}\xspace}
\newcommand{\GATS}{\textsc{\small{GATS}}\xspace}
\newcommand{\A}{\mathcal A}
\newcommand{\X}{\mathcal X}
\renewcommand{\L}{\mathcal L}
\newcommand{\E}{\mathbb E}
\newcommand{\Prob}{\mathbb P}
\newcommand{\N}{\mathcal N}
\newtheorem{lemma}{Lemma}
\newtheorem{proposition}{Proposition}
\newtheorem{proof}{Proof}
\title{Surprising Negative Results for Generative
Adversarial Tree Search}
\begin{document}
\author{
\hspace{-1.5cm}
  Kamyar Azizzadenesheli$^1$, Brandon Yang$^{2}$, Weitang Liu$^3$, Zachary C. Lipton$^{4}$, Animashree Anandkumar$^1$\\
  \hspace{-1.7cm} Caltech$^1$,
  Stanford University$^2$,
  UC Davis$^3$,
Carnegie Mellon University$^4$
}

\maketitle

\begin{abstract}
While many recent advances in deep reinforcement learning (RL) rely on model-free methods, model-based approaches remain an alluring prospect for their potential to exploit unsupervised data to learn environment model. In this work, we provide an extensive study on the design of deep generative models for RL environments and propose a sample efficient and robust method to learn the model of Atari environments. We deploy this model and propose generative adversarial tree search (\GATS) a deep RL algorithm that learns the environment model and implements Monte Carlo tree search (MCTS) on the learned model for planning. While MCTS on the learned model is computationally expensive, similar to AlphaGo, \GATS follows depth limited MCTS. \GATS employs deep Q network (DQN) and learns a Q-function to assign values to the leaves of the tree in MCTS. We theoretical analyze \GATS  vis-a-vis the bias-variance trade-off and show \GATS is able to mitigate the worst-case error in the Q-estimate. While we were expecting \GATS to enjoy a better sample complexity and faster converges to better policies, surprisingly, \GATS fails to outperform DQN. We provide a study on which we show why depth limited MCTS fails to perform desirably. 


\end{abstract}


\section{Introduction}
\label{sec:intro}
The earliest and best-publicized applications of deep RL involve Atari games~\citep{mnih2015human} and the board game of Go~\citep{silver2016mastering} which are simulated environments and experiences are inexpensive.  In such scenarios, model-free deep RL methods have been used to design suitable policies. But these approaches mainly suffer from high sample complexity and are known to be biased~\citep{thrun1993issues,antos2008learning,van2016deep}, making them less appropriate when the experiences are expensive. To mitigate the effect of the bias, one can deploy  MCTS methods~\citep{kearns2002sparse,kocsis2006bandit} to enhance the quality of policies by rolling out on the simulated environment. But this approach, for problems with a long horizon, e.g., Go, becomes computationally expensive. As a remedy, Alpha Go~\citep{silver2016mastering} propose to combine model free deep RL methods with model-based MCTS. They employ a depth-limited MCTS on the Go emulator and learn a Q-function to assign values to the leaf nodes. Despite advances in the Go game~\citep{silver2016mastering}, this approach still does not fully address the sample complexity issue. Moreover, in real-world applications, such as robotics~\citep{levine2016end} and dialogue systems~\citep{lipton2016combating}, not only collecting experiences takes considerable effort, but also there is no such simulator for these environments.

Recently, generative adversarial networks (GANs)~\citep{goodfellow2014generative} have emerged as a prominent tool for synthesizing realistic-seeming data, especially for high-dimensional domains, e.g., images. Unlike previous approaches to image generation, which typically produced blurry images due to optimizing on L1 or L2 loss, GANs produces crisp images. GANs have been extended to conditional generation, 
e.g., generating images conditioned on labels~\citep{mirza2014conditional,odena2016conditional} or next frames of a video given a context window~\citep{mathieu2015deep}. Recently, the \textsc{pix2pix} approach propose to use U-Net architecture and has demonstrated impressive results on a range of image-to-image translation tasks~\citep{isola2017image}.

In this study, we use Atari games in Arcade Learning Environment (ALE)~\citep{bellemare2013arcade} as our testbed to design sample efficient deep RL algorithms. We propose generative adversarial tree search (GATS), a Deep RL algorithm that learns the model of the environment and performs depth-limited MCTS on the learned model for planning. \GATS consists of three main components: 1) generative dynamics model (\GDM), a deep generative model that leverages \textsc{pix2pix} GANs and efficiently learns the dynamics of the environments. \GDM uses Wasserstein distance~\citep{arjovsky2017wasserstein} as the learning metric and deploys spectral normalization technique~\citep{miyato2018spectral} to stabilize the training. Condition on a state and a sequence of actions, \GDM produces visually crisp successor frames that agree closely with the real frames of games. 2) reward predictor (\RP), a classifiers that predict future rewards, clipped rewards of $[-1,0,1]$ in ALE, 3) a value based deep RL component to assign value to the leaf nodes of the depth limited trees in MCTS,~Fig.~\ref{fig:mcts}. For this purpose, we use \DQN~\citep{mnih2015human} and \DDQN~\citep{van2016deep}, but any other value-based deep RL approach is also suitable.

\textbf{\GDM:} Designing an efficient and robust \GDM model for RL task, in particular for Atari games is significantly challenging. The recent studies on GANs are manly dedicated to generating samples from a fixed distribution, while neither fast convergence, fast adaption, nor continual learning is considered. Even in the case of fixed distributions, GANs are known to be unstable and hard to train. RL problems are entirely on the opposite side of the hardness spectra. For RL problems, we need a \GDM with a proper model capacity that statistically adapts quickly to the distribution changes due to policy updates and exploration. Followed by the online nature of RL, we require a \GDM that computationally converges fast in the presence of changes in the data distribution, e.g. learning a new skill. More importantly, we need a \GDM that continually learns the environment dynamics without diverging or becoming unstable, even once, therefore being robust. These are the critical requirements for a useful \GDM. Despite the high computation cost of such design study, we thoroughly and extensively study various image translation methods, various GAN-based losses, architectures, in both feed-forward and recurrent neural networks, to design a \GDM, suitable for RL tasks. As a result of this study, we propose a \GDM architecture and learning procedure that reasonably satisfy all the mentioned requirements. We test the performance of \GDM visually, with L1 loss, L2 loss, and also by testing a Q function on generated and real samples.
%
%

\textbf{Theoretical analysis:}
We analyze the components of error in the estimation of the expected return in \GATS, including the bias and variance in the Q-function. We empirical study the error in the Q estimate of \DQN/\DDQN. Since \GATS deploys the Q function in the leaf nodes of the MCTS, we show that the errors in the Q-estimation disappear exponentially fast as the depth of MCTS grows. 

\textbf{Domain change results:} 
In order to thoroughly test the \GDM and \RP, we developed a new OpenAI~\citep{1606.01540} gym-like interface for the latest \emph{Arcade Learning Environment (ALE)}~\citep{machado2017revisiting} that supports different modes and difficulties of Atari games. We documented and open-sourced this package along with the codes for \GDM, \RP, and \GATS. We study the sample complexity of \GDM and \RP in adapting and transfer from one domain (mode and difficulty in ALE) to another domain. We show that the  \GDM and \RP  surprisingly are even robust to mode changes, including some that destroy the \DQN policy. They adapt to the new domain with order of thousands samples, while the Q-network requires significantly more, order of millions. 

Our initial empirical studies are conducted on Pong. The comparably low computation cost of Pong allows an extensive investigation of \GATS. We study MCTS with depths at most 5, even for Pong, each run $5M$ time steps requires at least five weeks of GPU time. This is evidence of the massive computational complexity of \GATS. We extend our study to four more popular Atari games.

\textbf{Surprising negative results:} 
Despite learning near-perfect environment model (\GDM and \RP) that achieves accuracy exceeding our expectations, \GATS is unable to outperform its base model-free model \DQN/\DDQN on any Atari game besides Pong for which we deployed an extra substantial parameter tuning. To boost up \GATS, we make an extensive and costly study with many different hyper-parameters and learning strategies, including several that uses the generated frames to train the Q-model, e.g., DynaQ~\citet{sutton1990integrated}. We also develop various exploration strategies, including optimism-based strategy for which We use the errors in the \GDM to navigate the exploration. The negative result persisted across all these innovations, and none of our developments helped to provide a modest improvement in \GATS performance.
Our initial hypothesis was that \GATS helps to improve the performance of deep RL methods, but after the extensive empirical study and rigorous test of this hypothesis, we observe that \GATS does not pass this test. We put forth a new hypothesis for why \GATS, despite the outstanding performance of its components along with the theoretical advantages, might fail in short rollouts. In fact, we show that \GATS locally keeps the agent away from adverse events without letting the agent learn from them, resulting in faulty Q estimation. \citet{holland2018effect}, in fact, observe that it might require to make the depth up to the effective horizon in order to draw an improvement.



To make our study concrete, we create a new environment, \textit{Goldfish and gold bucket}. To exclude the effect of model estimation error, we provide \GATS with the true model for the MCTS. We show that as long as the depth of the MCTS is not close to the full effective horizon of the game, \GATS does not outperform \DQN.  This study results in a conclusion that combining MCTS with model-free Deep RL, as long as the depth of MCTS is not deep enough, might degrade the performance of RL agents. It is important to note that the theoretical justification does not contradict our conclusion. The theoretical analysis guarantees an improvement in the worse case error in Q estimation, but not in the average performance. It also does not state that following \GATS would result in a better Q estimation; in fact, the above argument states that \GATS might worsen the Q estimation.

Consider that all the known successes of MCTS involve tree depth in the hundreds, e.g., $300$  on Atari emulator\citep{guo2014deep}. Such deep MCTS on \GDM requires massive amounts of computation beyond the scale of academic research and the scope of this paper. Considering the broader enthusiasm for both model-based RL and GANs, we believe that this study, despite its failure to advance the leaderboard of deep RL, illuminates several important considerations for future work in combining model-based and model-free reinforcement learning.

\textbf{Structure:} This paper consists of a long study on \GATS and concludes with a set of negative results. We dedicate the main body of this paper to present \GATS and its components. We leave the detail of these our developments to the supplementary section and instead devote the rest to explain the negative results. We hope that the readers find this structure succinct, useful, and beneficial.



\section{Related Work}
\label{sec:related}
Sample efficient exploration-exploitation trade-off is extensively studied in RL literature from Bandits~\citep{auer2003using}, MDPs~\citep{kearns2002near,brafman2003r,asmuth2009bayesian,bartlett2009regal,jaksch2010near}, and Monte Carlo sampling~\citep{kearns2002sparse,kocsis2006bandit} to partially observable and partial monitoring \citep{azizzadenesheli2016reinforcement,bartok2014partial} in low dimensional environment, leaves it open in high dimensional RL~\citep{mnih2015human,abel2016exploratory,azizzadenesheli2016rich}. 
To extend exploration-exploitation efficient methods to high dimensional RL problems, \citet{lipton2016efficient} suggests  variational approximations, \citet{osband2016deep} suggests a bootstrapped-ensemble, \citet{bellemare2016unifying} propose a surrogate to optimism, \citep{fortunato2017noisy} suggest noisy exploration. 
Among these methods, \citet{abbasi-yadkori2011improved,azizzadenesheli2018efficient} provide insight in high dimensional linear RL problems. While most of the deep RL methods are based on model-free approaches, the model-based deep RL has been less studied.

Model based deep RL approaches requires building a notion of model for the reasoning. 
%
%
Recently, conditional video predictions based on L1 and L2 losses~\citep{oh2015action} have been deployed by \citet{weber2017imagination} who train a neural network to encodes the generated trajectories into an abstract representation, which then is used as an additional input to the policy model. They validate their methods on Sokoban, a small puzzle  world, and their miniPacman environment. Unlike \GATS, ~\citet{weber2017imagination} does not have explicit planning and roll-out strategies. A similar approach to \GATS is concurrently developed using variational methods and empirically studied on Car Racing and VizDoom \citep{DavidHa}. Other works propose to roll out on a transition model learned on a encoded state representation ~\citep{oh2017value}, and demonstrate modest gains on Atari games. A similar approach also has been studied in robotics~\citep{wahlstrom2015pixels}. In contrast, we learn the model dynamics in the original state/pixel space.

Compared to the previous works, \GATS propose a flexible, modular, and general framework for the study of model-based and model-free RL
with four building blocks: ($i$) value learning  ($ii$) planning ($iii$) reward predictor, and ($iv$) dynamics model. This modularity provides a base for further study and adaptation in future works. For instance, for value learning ($i$): one can use Count-based methods~\citep{bellemare2016unifying}.  For planning ($ii$): one can use upper confidence bound tree search (UCT)~\citep{kocsis2006bandit} or policy gradient methods~\citep{kakade2002natural,schulman2015trust}. 
For the reward model ($iii$): if the reward is continuous, one can deploy regression models. Lastly, for model dynamics ($iv$), one can extend \GDM or choose any other generative model. Furthermore, this work can also be extended to the $\lambda$-return setting,  where a mixture of $n$ steps MCTS and Q is desired. Although \GATS provides an appealing and flexible RL paradigm, it suffers from massive computation cost by modeling the environment. Clearly, such costs are acceptable when experiences in the real world are expensive. Potentially, we could mitigate this bottleneck with parallelization or distilled policy methods~\citep{guo2014deep}.

\section{Generative Adversarial Tree Search}
\label{sec:gats}
\textbf{Bias-Variance Trade-Off:} 
Consider an MDP $M=\langle \X, \A, T, R,\gamma\rangle$, 
with state space $\X$, action space $\A$, transition kernel $T$, reward distribution $R$ with $[0,1]$-bounded mean, and discount factor $0\leq\gamma<1$. 
A policy $\pi$ is a mapping from state to action and $Q_{\pi}(x,a)$ denotes the expected return of action $a$ at state $x$, then following policy $\pi$. 
Following the Bellman equation, the agent can learn the Q function by minimizing the following loss over samples to improve its behavior:
\begin{align}\label{eq:double}
\left(Q(x,a)-\E_{\pi}\left[r+\gamma Q(x',a')\right|x,a]\right)^2
\end{align}
Since in RL we do not have access internal expectation, we instead minimize the Bellman residual~\citep{schweitzer1985generalized}\citep{lagoudakis2003least,antos2008learning} which has been extensively used in DRL literature~\citep{mnih2015human,van2016deep}: 
\begin{align*}
\!\!\!\E_{\pi}\!\!\left[\!\left(Q(x,a)\!-\!\left(r+\gamma Q(x',a')\right)\right)^2\!\Big|x,a\right]\!\!
=\!\!\left(\!Q(x,a)\!-\!\E_{\pi}\!\!\left[r\!+\!\gamma Q(x'\!,\!a')\Big|x,a\right]\!\right)^2\!\!\!\!+\!\!\textnormal{Var}_\pi\!\!\left(\!r\!+\!\gamma Q(x',a')\Big|x,a\!\right)
\end{align*}
%
This is the sum of Eq.~\ref{eq:double} 
and an additional variance term, rustling in a biased estimation of Q function. \DQN deploys a target network to slightly mitigate this bias~\citep{van2016deep} and minimizes:
\begin{align}\label{eq:reg}
\hspace*{-0.3cm}\L(Q,Q^{\text{target}}) \!=\! \E_{\pi}\!\left[\left(Q(x,a)\!-r\!-\!\gamma Q^{target}(x',a')\right)^2\right]\!\!\!\!
\end{align}
In addition to this bias, there are statistical biases due to limited network capacity, optimization, 
model mismatch, and max operator or the choice of $a'$~\citep{thrun1993issues}. 
Let $\wh{\cdot}$ denote an estimate of a given quantity. Define the estimation error in Q function (bias+variance), the reward estimation $\wh{r}$ by \RP, and transition estimation $\wh{T}$ by \GDM as follows: $\forall x,{x}',a\in\X,\A$
\begin{align*}
|Q(x,a)- \wh{Q}(x,a)|\leq e_Q,~\sum_{a}\left|\Big(r(x,a)-\wh{r}({x},a)\Big)\right|\leq e_R,~\sum_{x'}\left|\Big(T(x'|x,a)-\wh{T}({x}'|x,a)\Big)\right|\leq e_T
\end{align*}
For a rollout policy $\piroll$, we compute the expected return using \GDM, \RP and $\wh Q$ as follows:
\begin{align*}
\xi_p(\piroll,x):=\E_{\piroll,\GDM,\RP}\left[\left(\sum_{h=0}^{H-1}\gamma^h\wh{r}_h\right)+\gamma^H\max_a\wh{Q}({x}_H,a)\Big| x\right]
\end{align*}
The agent can compute $\xi_p(\piroll,x)$ with no interaction with the environment.

Define $\xi(\piroll,x) :=\E_{\piroll}\left[\left(\sum_{h=0}^{H-1}\gamma^hr_h\right)+\gamma^H\max_a{Q}({x}_H,a)\Big| x\right]$. 

\begin{proposition}\label{thm:BV}[Model-based Model-free trade-off] 
Building a depth limited MCTS on the learned model, \GDM,\RP, and estimated Q function, $\wh Q$, results in expected return of $\forall x,\piroll$:
\begin{align}
&|\xi_p(\piroll,x)-\xi(\piroll,x)|\leq \frac{1-\gamma^{H}+H\gamma^{H}(1-\gamma)}{(1-\gamma)^2}e_T+\frac{1-\gamma^{H}}{1-\gamma}e_R+\gamma^He_Q,~~\forall x\in\X,\forall\piroll
\end{align}
Proof in the Appendix~\ref{sec:proof} 
\end{proposition}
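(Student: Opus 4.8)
The plan is to run a telescoping simulation-lemma argument over the $H$ rollout steps, peeling off the reward, transition, and terminal-value errors one at a time. The key structural move is to split the $H$-step return into its discounted reward-to-go and its discounted bootstrapped terminal value, and to control each by a separate backward recursion over the horizon. Concretely, for $h=0,\dots,H$ I would define the true reward-to-go $R_h(x):=\E_{\piroll}[\sum_{h'=h}^{H-1}\gamma^{h'-h}r_{h'}\mid x_h=x]$ and terminal-value-to-go $V_h(x):=\E_{\piroll}[\gamma^{H-h}\max_a Q(x_H,a)\mid x_h=x]$, together with their model-based counterparts $\wh R_h,\wh V_h$ obtained by replacing $r,T,Q$ with the \RP, \GDM, and $\wh Q$ estimates $\wh r,\wh T,\wh Q$. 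Since $\xi(\piroll,x)=R_0(x)+V_0(x)$ and $\xi_p(\piroll,x)=\wh R_0(x)+\wh V_0(x)$, the triangle inequality reduces the claim to bounding $\sup_x|\wh R_0-R_0|$ and $\sup_x|\wh V_0-V_0|$ separately, and each quantity satisfies a clean one-step Bellman recursion, e.g. $R_h(x)=\E_{a\sim\piroll}[r(x,a)+\gamma\sum_{x'}T(x'\mid x,a)R_{h+1}(x')]$.

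Next I would expand each one-step difference into three familiar pieces and bound them termwise. Writing $\rho_h:=\sup_x|\wh R_h-R_h|$, the recursion gives $\rho_h\le e_R+\gamma\,\|R_{h+1}\|_\infty\,e_T+\gamma\rho_{h+1}$, where the reward discrepancy is bounded by $e_R$ because $\piroll(\cdot\mid x)$ is a distribution and $\sum_a|\wh r-r|\le e_R$; the transition discrepancy acting on the downstream value is bounded by $\|R_{h+1}\|_\infty\,e_T$ via H\"older together with $\sum_{x'}|\wh T-T|\le e_T$, using $\|R_{h+1}\|_\infty\le\frac1{1-\gamma}$; and the propagated term contracts by $\gamma$ because $\wh T(\cdot\mid x,a)$ sums to one. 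Unrolling from $h=0$ with $\rho_H=0$ and summing the geometric series yields $\rho_0\le\frac{1-\gamma^H}{1-\gamma}e_R+\frac{\gamma(1-\gamma^H)}{(1-\gamma)^2}e_T\le\frac{1-\gamma^H}{1-\gamma}e_R+\frac{1-\gamma^H}{(1-\gamma)^2}e_T$, the last step simply relaxing the $\gamma$ prefactor to $1$. For the terminal term, setting $\nu_h:=\sup_x|\wh V_h-V_h|$, the same expansion gives $\nu_h\le\gamma\,\|V_{h+1}\|_\infty\,e_T+\gamma\nu_{h+1}$ with $\nu_H=\sup_x|\max_a\wh Q-\max_a Q|\le e_Q$, the last inequality because the max over actions is $1$-Lipschitz; here $\|V_{h+1}\|_\infty\le\frac{\gamma^{H-h-1}}{1-\gamma}$, so at level $h$ the transition term contributes $\frac{\gamma^{H-h}}{1-\gamma}e_T$ and, after the $\gamma^h$ discount accrued by unrolling, a constant $\frac{\gamma^{H}}{1-\gamma}e_T$ per step. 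Summing over the $H$ steps and adding the discounted terminal error produces $\nu_0\le\frac{H\gamma^H}{1-\gamma}e_T+\gamma^H e_Q$.

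The step I expect to be the crux is exactly this terminal bookkeeping: the $H\gamma^H$ factor is not an artifact but reflects that a transition error committed at any of the $H$ steps corrupts the distribution of the terminal state $x_H$ at which the bootstrapped value $\max_a Q$ (of magnitude up to $\frac{1}{1-\gamma}$) is read off, and these $H$ contributions each survive the discounting with a net weight $\gamma^H$ rather than decaying geometrically. Keeping the two recursions separate, rather than collapsing them into a single value-to-go, is what produces the stated decomposition of the $e_T$ coefficient into a reward part contributing $\frac{1-\gamma^H}{(1-\gamma)^2}$ and a terminal part contributing $\frac{H\gamma^H}{1-\gamma}$. Adding the two bounds $\rho_0$ and $\nu_0$ and placing the $e_T$ coefficients over the common denominator $(1-\gamma)^2$ gives the numerator $1-\gamma^H+H\gamma^H(1-\gamma)$, while the $e_R$ and $e_Q$ coefficients assemble directly into the stated inequality; the only remaining work is the routine algebra of the finite geometric sums.
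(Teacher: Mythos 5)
Your proof is correct, and while its underlying content is the same simulation-lemma telescoping that the paper uses, the packaging is genuinely different. The paper expands both $\xi(\piroll,x)$ and $\xi_p(\piroll,x)$ as explicit sums over length-$H$ trajectories and interpolates by swapping one factor at a time (the transition kernel, then the reward, at each step), bounding each swapped difference against the nonnegative downstream return it multiplies, and it invokes a separate lemma, $|\max_a\wh{Q}(x,a)-\max_a Q(x,a)|\le e_Q$ (proved by the standard argmax comparison), for the leaf term --- your ``max is $1$-Lipschitz'' step is exactly that lemma. What you do differently is (i) replace the explicit trajectory expansion by two backward sup-norm recursions, and (ii) split the return into a reward-to-go part $R_h$ and a bootstrapped terminal part $V_h$ that are bounded separately; the paper keeps these together in a single expansion. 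Your split buys a more transparent account of the constants: the $H\gamma^{H}(1-\gamma)$ piece of the $e_T$ coefficient arises in your bookkeeping as a genuine contribution (each of the $H$ transition errors perturbs the terminal-state distribution at which a value of magnitude up to $1/(1-\gamma)$ is read off, with net weight $\gamma^H$), whereas in the paper's accounting the collected per-step coefficients $\sum_{i=1}^{H}\gamma^{i-1}\tfrac{1-\gamma^{H+1-i}}{1-\gamma}$ actually evaluate to $\tfrac{1-\gamma^{H}-H\gamma^{H}(1-\gamma)}{(1-\gamma)^2}$, so the stated numerator $1-\gamma^{H}+H\gamma^{H}(1-\gamma)$ is reached there by loosening in the final step rather than being tracked termwise. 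Both routes are valid and yield the stated inequality; yours is more modular and explains the shape of the bound, the paper's stays closer to the raw definitions of the two expectations.
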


Therefore, as the depth of the rollout increases, the effect of error in Q function approximation disappears exponentially at the cost of magnifying the error in \GDM and \RP. In the next sections, we show that in fact, the error in  \GDM and \RP are surprisingly low (refer to Appendix~\ref{apx:BV}).
%
%

\textbf{Model:} Algorithm~\ref{alg:bdqn} present \GATS,~Fig.~\ref{fig:mcts}. We parameterize \GDM with $\theta^{\GDM}$, and \RP with $\theta^{\RP}$. We adopt the standard \DQN architecture and game settings~\citep{mnih2015human}. We train a \DQN model as well as \GDM and \RP on samples from replay buffer. To train \GDM, we deploy Wasserstein distance along with L1 and L2 loss, in the presence of spectrally normalized discriminator. For exploration and exploitation trade-off, we deploy $\epsilon$-greedy, the strategy used in \DQN. We also propose a new optimism-based explore/exploit method using the discriminator loss. 
%
%
We empirically observed that computed Wasserstein distance by the discriminator is high for barely visited state-action pairs and low otherwise. Inspired by pseudo-count construction in \citet{ostrovski2017count}, we deploy Wasserstein distance as a notion of count $\tilde{N}(x,a)$. Following the construction of optimism in the face of uncertainty~\citep{jaksch2010near}, we utilize $\tilde{N}(x,a)$ to guide the agent to explore unknown parts of the state space. We define the optimistic (in Wasserstein sense) $\tilde{Q}$ as $\tilde{Q}_{\pi}(x,a)= Q_{\pi}(x,a)+C_{\pi}(x,a)$ where $C_{\pi}(x,a)$ is 
\begin{align}\label{eq:optimism}
c\sqrt{1/\tilde{N}(x,a)}+\gamma\sum_{x'}\wh{T}(x'|x,a)C_{\pi}(x',\pi(x'))
\end{align}
with a scaling factor $c$. We train a separate \DQN to learn the $C$ model, the same way we learn Q. We use $C$ for planning: $\max_\pi\lbrace\wh{\xi}(\pi,x)+C(\pi,x)\rbrace$ instead of $\epsilon$-greedy to guide exploration/exploitation.
\begin{algorithm}[t]
\caption{\GATS(H)}
\begin{algorithmic}[1]
\STATE Initialize parameter sets $\theta$, $\theta^{target}$, $\theta^{\GDM}$,  $\theta^{\RP}, m$, replay buffer and set counter $=~0$
\FOR{episode = 1 to $\inf$}
\FOR{$t=1$ to the end of episode}
    \STATE Compute $a_t$ and sample $\lbrace(x_i, a_i, r_i, x_{i+1})\rbrace_0^m$
    from $MCTS(x_t,H,\theta,\theta^{\GDM},\theta^{\RP})$
	\STATE Set the real $(x_t, a_t, r_t, x_{t+1})$ and generated $\lbrace(x_i, a_i, r_i,x_{i+1})\rbrace_0^m$ to the replay buffer
  \STATE Sample a minibatch of experiences ($x_{\tau}$ , $a_{\tau}$, $r_{\tau}$, $x_{\tau+1}$)
\STATE $y_{\tau} \!\!\gets\! \bigg\{\begin{array}{ll}
 \!\!\!\! r_{\tau} &\!\!\!\!\text{terminal}\\
\!\!\!\! r_\tau + \max_{a'} Q(x_{\tau+1}, a'; \theta^{{target}})&
     \!\!\!\!\text{non-terminal }\\
\end{array}\!\!\!\!\!\!\! $ 
\STATE $\theta \gets \theta - \eta \cdot \nabla_{\theta} (y_\tau - Q(x_{\tau}, a_{\tau}; \theta))^2$ 
\STATE Update \GDM, and \RP
   
\ENDFOR
\ENDFOR
\end{algorithmic}
\label{alg:bdqn}
\end{algorithm}


\section{Experiments}\label{sec:experiments}
We study the performance of \GATS with depth of at most five on five Atari games; Pong, Asterix, Breakout, Crazy Climber 
and Freeway.
For the \GDM architecture, Fig.~\ref{fig:gdm_generator}, we build upon the U-Net model of the image-to-image generator originally used in \textsc{pix2pix}~\citep{isola2017image}. The \GDM receives a state, sequence of actions, and a Gaussian noise vector and generates the next states.\footnote{See the Appendix for the detailed explanation on the architecture and optimization procedure.} The \RP is a simple model with 3 outputs, it receives the current state, action, and the successor state as an input and outputs a label, one for each possible clipped reward $\lbrace-1,0,1\rbrace$. We train \GDM and \RP using prioritized weighted mini-batches of size 128 (more weight on recent samples), and update the two networks every 16 decision steps (4 times less frequently than the Q update).\footnote{For the short lookhead in deterministic environment, we expand the whole tree.}

Our experiments show that with less than $100k$ samples the \GDM learns the environment's dynamics and generalizes well to a test set (\DQN requires around $5M$. We also observe that it adapts quickly even if we change the policy or the difficulty or the mode of the domain. Designing \GDM model is critical and hard since we need a model which learns fast, does not diverge, is robust, and adapts fast. GAN models are known to diverge even in iid sample setting. It become more challenging, when the \GDM is condition on past frames as well as future actions, and needs to predict future given its out generated samples. In order to develop our \GDM, 
we experimented many different model architectures 
for the generator-discriminator, 
as well as different loss functions.

Since the L1 and L2 losses are not good metrics for image generating models, we evaluated the performance of \GDM visually also by applying Q function on generated test sample, Appendix.~\ref{apx:DA}, Fig.~\ref{fig:gdm_curves}. We studied PatchGAN discriminator (patch sizes 1, 16, and 70) and $L1$ loss used in \textsc{pix2pix}~\citep{isola2017image}, finding that this architecture takes approximately $10\times$ more training iterations to learn game dynamics for Pong than the current \GDM. This is likely since learning the game dynamics such as ball position requires the entire frame for the discriminator, more than the patch-based texture loss given by PatchGAN. Previous works propose ACVP and train large models with L2 loss to predict long future trajectories of frames given actions ~\citep{oh2015action}. We empirically studied these methods. While \GDM is much smaller, we observe that it requires significantly fewer iterations to converge to perceptually unidentifiable frames. We also observed significantly lower error for \GDM when a Q function is applied to generated frames from both models. ACVP also struggles to produce meaningful frames in stochastic environments(Appendix~\ref{apx:losses}).

For the choice of the \GAN loss, we first tried the original \GAN-loss~\citep{goodfellow2014generative}, which is based on Jensen–Shannon divergence. With this criterion, not only it is difficult to find the right parameters 
but also not stable enough for non-stationary domains. 
We did the experiments using this loss and trained for Pong while the resulting model was not stable enough for RL tasks. The training loss is sometimes unstable even for a given fixed data set. 
Since, Wasserstein metric provides Wasserstein distance criterion and propose a more general loss in \GAN{}s, we deployed W-GAN~\citep{arjovsky2017wasserstein} for our \GDM. Because W-GAN requires the discriminator to be a bounded Lipschitz function, the authors adopt gradient clipping. Using W-GAN results in an improvement but still not sufficient for RL where fast and stable convergence is required. 

In order to improve the learning stability, parameter robustness, and quality of frames, we also tried a follow-up work on improved-W-GAN~\citep{gulrajani2017improved}, which adds a gradient penalty into the loss of discriminator in order to satisfy the bounded Lipschitzness. Even though it made the \GDM more stable than before, it was still not sufficient due to the huge instability in the loss curve. Finally, we tried spectral normalization~\citep{miyato2018spectral}, a recent technique that not only provides high-quality frames but also converges quickly while the loss function stays smooth. We observed that this model, is stability, robustness to hyperparameter choices, and fast learning thank to spectral normalization combined with W-GAN in the presence of L1 and L2 losses. We deploy this approach in \GDM. 
For the training, we trained \GDM to predict the next frame and generate future frames given its own generated frames. We also trained it to predict multiple future frames. Moreover, we studied both feed-forward and recurrent version of it. Furthermore, we used both one step loss and multi-step loss. After these study, we observed that the feed-forward model which predicts the next state and trained on the loss of the next three frames generalizes to longer rollout and unseen data. More detailed study is left to Appendix. It is worth noting that the code to all these studies are publicly available (Appendix~\ref{apx:GDM}).

Fig.~\ref{fig:exp} shows the effectiveness of \GDM and how accurate it can generate next $9$ frames just conditioning on the previous $4$ frames and a sequnce of actions. We train \GDM using 100,000 frames and a 3-step loss, and evaluate its performance on 8-step roll-outs on unseen 10,000 frames. Moreover, we illustrate the tree constructed by MCTS in Figs.~\ref{fig:gdm_trees},~\ref{fig:gdm_trees1}. We also test a learned Q function on both generated and real frames and observed that the relative deviation is significantly small $(\mathcal{O}(10^{-2}))$. Furthermore, we constructed the tree in MCTS As deep RL methods are data hungry, we can re-use the data generated by \GDM to train the Q-function even more. We also study ways we can incorporate the generated samples by \GDM-\RP to train the Q-function, similar to Dyna-Q~Fig.~\ref{eq:optimism}. 
\paragraph{Shift in Domain} 
We extend our study to the case where we change the game mode. 
In this case, change Pong game mode, the opponent paddle gets halved in size, resulting in much easier game. While we expect a trained \DDQN agent to perform well on this easier game, surprisingly we observe that not only the agent breaks, but also it provides a score of -21, the most negative score possible. While mastering Pong takes $5M$ step from \DDQN, we expected a short fine tuning would be enough for \DDQN to adapt to this new domain. Again surprisingly, we observe that it take 5M times to step for this agent to adapt to this new domain. It means that \DDQN clearly overfit to the original domain. While \DDQN appears unacceptably brittle in this scenario,
\GDM and \RP adapt to the new model dynamics in less than 3k samples, 
which is significantly smaller (Appendix~\ref{apx:DA}). 
As stated before, for this study, we wrote a Gym-style wrapper for the new version of ALE which supports different modes of difficulty levels. This package is publicly available.


\section{Discussion}
\label{sec:discussion}

Our initial hypothesis was that \GATS enhances model-free approaches. With that regard, we extensively studied \GATS to provide an improvement. But, after more than a year of unsuccessful efforts by multiple researchers in pushing for improvement, we reevaluated \GATS from the first principles and realized that this approach, despite near-perfect modeling, surprisingly might not even be capable of offering any improvement. We believe this negative result would help to shape the future study of model-based and model-free RL. In the following, we describe our efforts and conclude with our final hypothesis on the negative result, Table ~\ref{table:reasons}.

\begin{table*}[ht!]
\vspace*{-0.5cm}
  \centering
  \caption{Set of approaches explored to improve \GATS performance}
  \footnotesize
  \begin{tabular}{c|c|c|c}
    \toprule
 Replay Buffer  & Optimizer  & Sampling strategies &  Optimism \\
  \midrule
  \makecell{(i)~Plain DQN\\ (ii)~Dyna-Q}   & \makecell{(i)~Learning rate \\(ii)~Mini-batch size }   & \makecell{(i)~Leaf nodes \\(ii)~Random samples from the tree \\(iii)~Samples by following greedy Q \\~(iv)~Samples by following $\varepsilon$-greedy Q\\~(v) Geometric distribution} &  \makecell{(i)W-loss \\(ii)~exp(W-loss) \\(iii)~L1+L2+W-distance \\(iv)~exp(L1+L2+W-distance)} \\
    \bottomrule
  \end{tabular}
  \label{table:reasons}
\end{table*}

\textbf{Replay Buffer:} 
The agent's decision under \GATS sometimes differs from the decision of the model-free component. To learn a reasonable Q-function, the Q-leaner needs to observe the outcome of its own decision. To address this problem, we tried storing the samples generated in the tree search and use them to further train the Q-model. We studied two scenarios: (i) using plain \DQN with no generated samples and (ii) using Dyna-Q approach and train the Q function also on the generated samples in MCTS. However, these techniques did not improve the performance of \GATS.
\textbf{Optimizer:} Since \GATS, specially in the presence of Dyna-Q is different approach from \DQN, we deploy a further hyper parameter tuning on learning rate and mini-batch size.

\textbf{Sampling strategy:} 
We considered a variety ways to exploit the samples generated in tree search for further learning the Q. (i) Since we use the Q-function at the leaf nodes, we further train the Q-function on generated experience at the leaf nodes. (ii) We randomly sampled generated experience in the tree to further learn the Q. (iii) We choose the generated experience following the greedy action of the Q-learner on the tree, the trajectory that we would have received by following the greedy decision of Q (counterfactual branch). We hypothesized that training the Q-function on its own decisions results in an improvement. (iv) We also considered training on the result of $\varepsilon$-greedy policy instead of the greedy. (v) Finally, since deeper in the tree has experiences with bigger shift the policy, we tried a variety of different geometric distributions to give more weight of sampling to the later part of the tree to see which one was most helpful.

\textbf{Optimism:}
We observed that parts of the state-action space that are novel to the \GDM are often explored less and result in higher discriminator errors. We added (i) the $W$-loss and also (ii) its exponent as a notion of external reward to encourage exploration/exploitation. In (iii) and (iv) We did the same with the summation of different losses, e.g., L1, L2.
%

Despite this extensive and costly study on \GATS, we were not able to show that \GATS,~Fig.~\ref{fig:Asterix-2} benefits besides a limited improvement on Pong (Appendix~\ref{apx:pong},\ref{apx:negav}). 

\begin{figure}[ht!]
\centering
\includegraphics[scale=0.28]{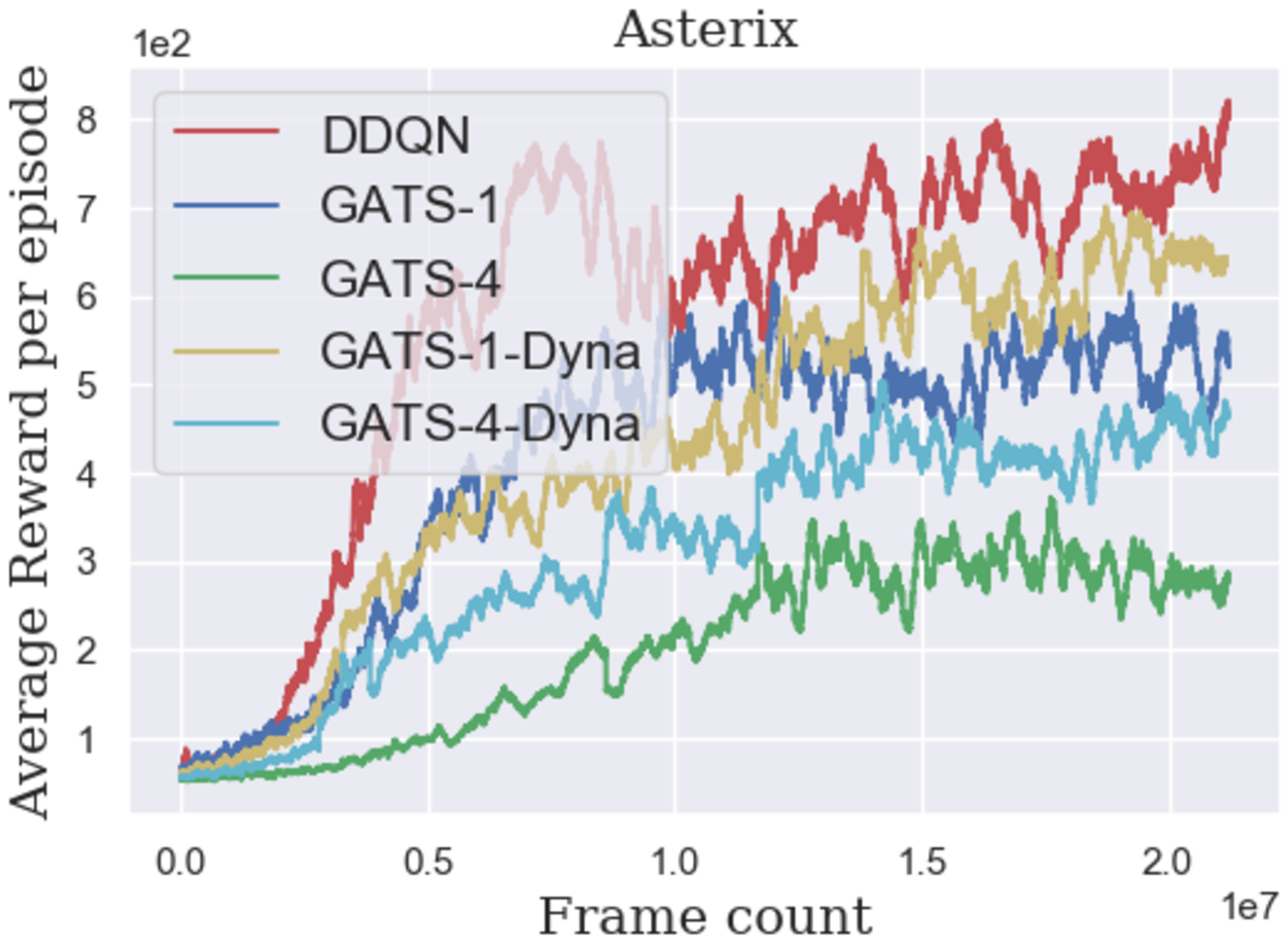}
\includegraphics[scale=0.28]{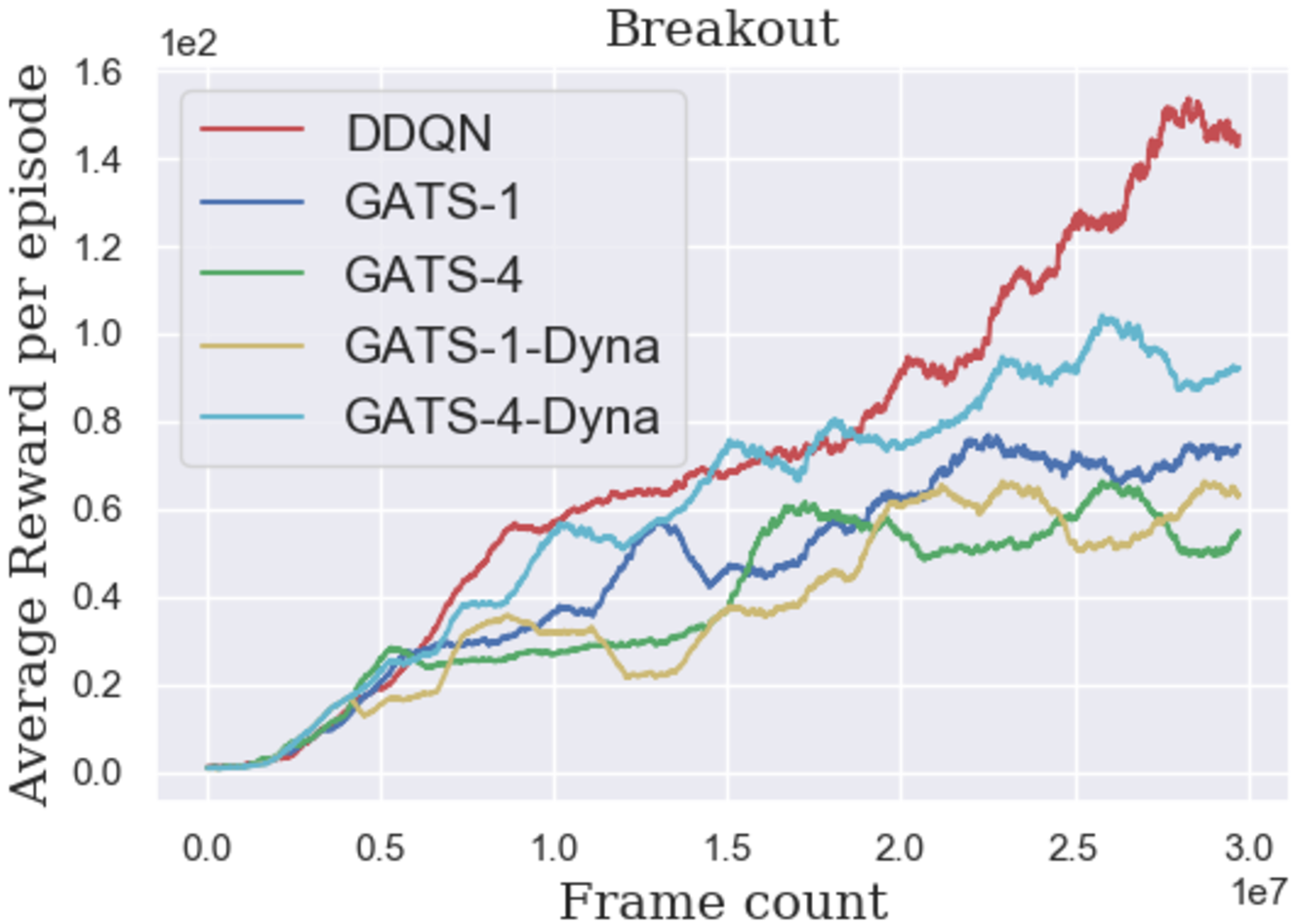}
\caption{\GATS and \GATS+Dyna with 1 and 4 step look-ahead.}
\label{fig:Asterix-2}
\end{figure}

\begin{figure}[ht]
\centering
\begin{minipage}{0.2\textwidth}
\centering
\hspace*{-.7cm}
    \includegraphics[width=1.\textwidth]{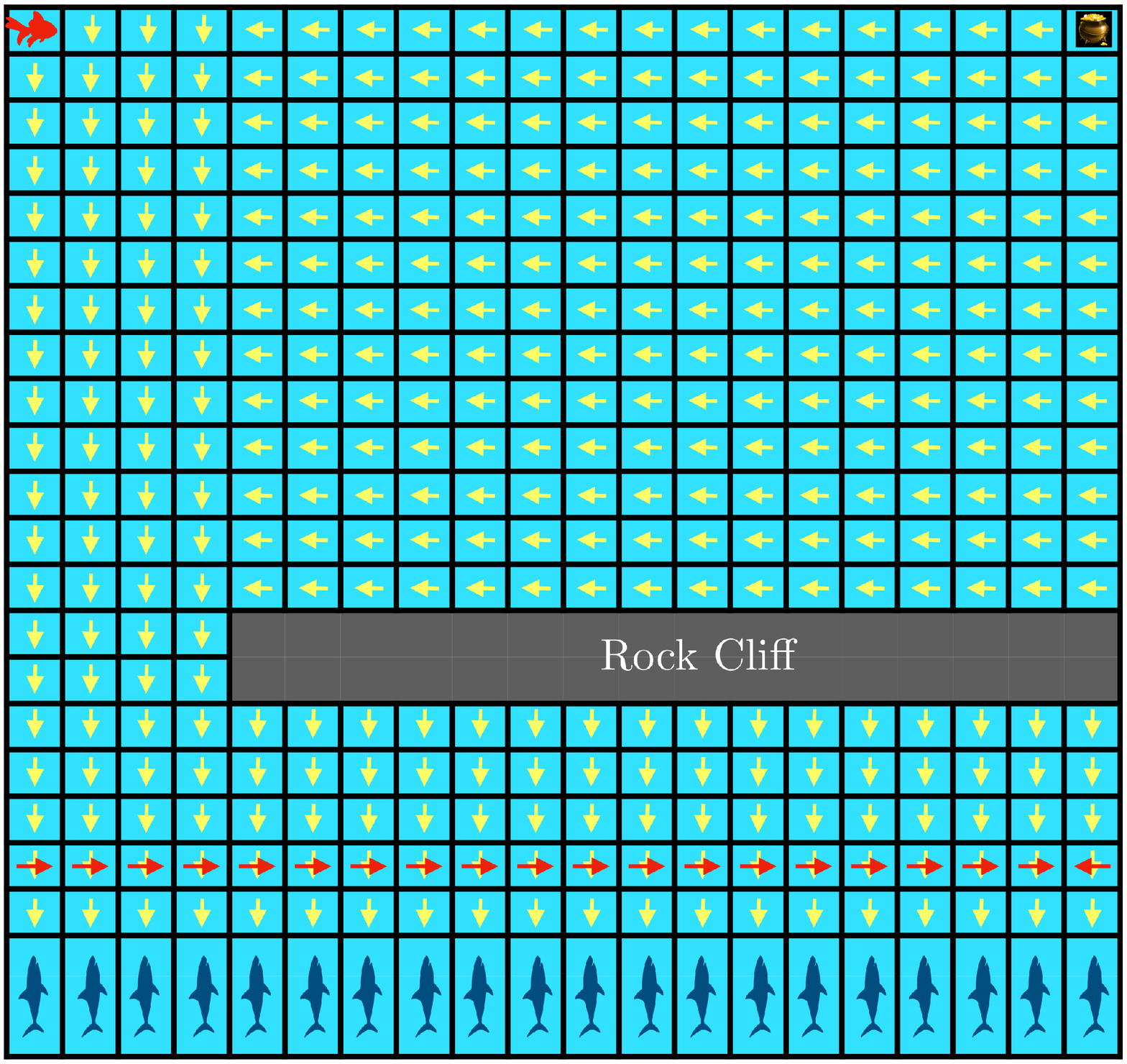}\\
\hspace*{-.7cm}  
(a)
    \end{minipage}
    \begin{minipage}{0.2\textwidth}
    \centering
    \includegraphics[width=1.\textwidth]{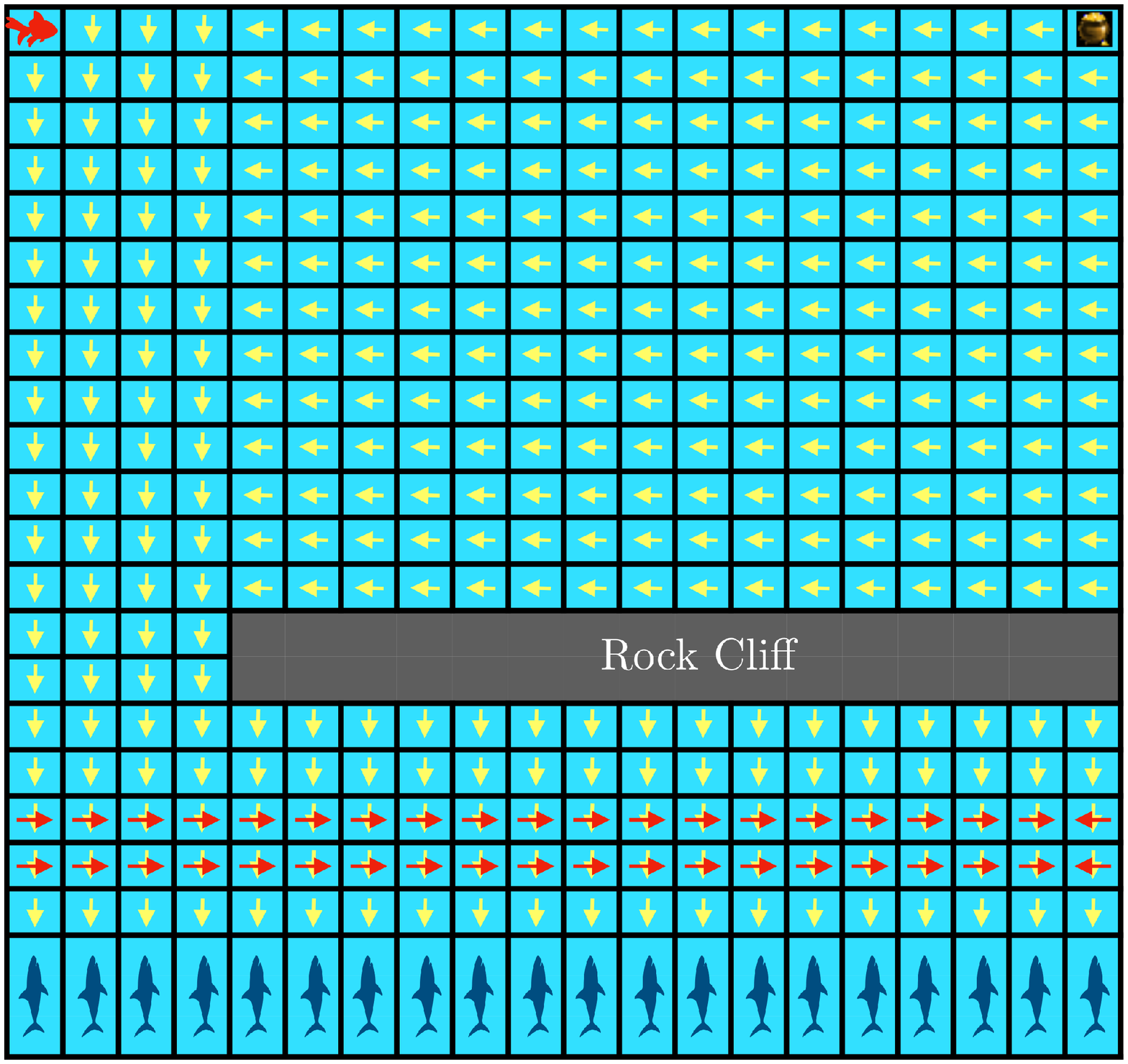}\\
    (b)
    \end{minipage}
    \begin{minipage}{0.2\textwidth}
    \centering
    \includegraphics[width=1.4\textwidth,height=1.\textwidth]{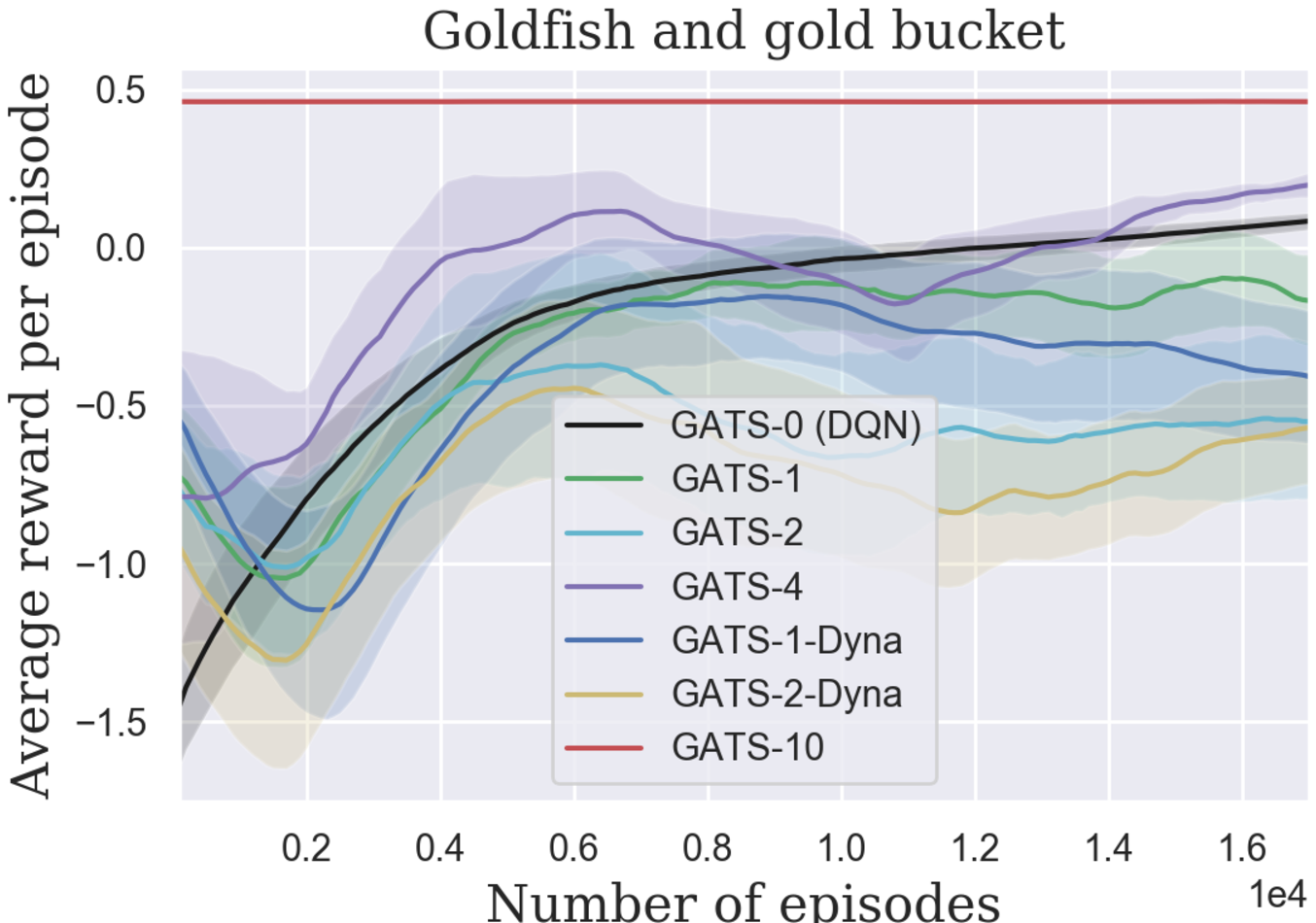}\\
    \hspace*{1.cm}
    (c)
    \end{minipage}    
    \caption{The Goldfish looks for a Gold bucket. The Q function is initialized such that 
    the yellow arrows represent the greedy action of each state. The red arrows are the actions suggested by MCTS depth two. \GATS is given the true model(a) \GATS locally prevents the goldfish from hitting the sharks 
    but also prevents learning from such events, therefore slows down the learning.(b) Even if the goldfish uses the prediction of the future event for further learning, as in Dyna-Q,the slow down issue still persists.(c) For a grid world of $10\times10$, and randomly initialized Q function,
    \GATS with depth of $10$ (\GATS-10) results in the highest return. Moreover, \GATS with nonzero depth locally saves the agent from hitting the sharks, but in the long run it degrades the performance.}
    \label{fig:goldfish}
    \vspace*{-0.5cm}
\end{figure}

\paragraph{Hypothesis on negative results:} 
In the following we reason why \GATS with short depth might not boost the performance even with perfect modeling with \GDM and \RP, despite reducing local bias. Consider a toy example described in Fig.~\ref{fig:goldfish}(a) where a fish starts with an initialization of the Q function such that the greedy action is represented in yellow arrows. We give the true model to \GATS (no modeling error). If the fish follows \DQN, it reaches the sharks quickly and receives a negative reward, update the Q function and learns the down action is not a good action. Now consider the depth 2 \GATS with the same Q-function initialization. When the agent reaches the step above the sharks, the MCTS roll-out informs the agent that there is a negative reward of going down and the agent chooses action right, following the \GATS action (red arrows). The agent keeps following the red arrows until it dies following $\varepsilon$-greedy. \GATS locally avoid bad states, but does this information globally get propagated? Consider that this negative reward happens much later in the trajectory
than for a \DQN agent. Therefore, many more updates are required for the agent to learn that action down is not a good action while this negative signal may even vanish due to the long update lengths. This shows how \GATS roll-outs can locally help to avoid (or reach) catastrophic (or good) events, but slow down global understanding.

As we suggested in our negative results, this problem can be solved by also putting the generated experiences in the replay buffer following Dyna-Q. Fig.~\ref{fig:goldfish}(b) illustrates the situation where the \GATS action in the third row before the sharks is ``right''. Therefore, similar to the previous setting, the agent keeps avoiding the sharks, choosing action right over and over and do not experience the shark negative signal. In this case, two-step roll-outs 
do not see the sharks, and thus Dyna-Q does not speed up the learning. In practice, especially with limited memory of the replay buffer and limited capacity of the function classes, it can be also difficult to tune Dyna-Q to work well.


To empirically test this hypothesis, we implemented the 10x10 version of \textit{Goldfish and gold bucket} environment where the \GATS agent has access to the true model of environment. We tested \GATS with depths of 0 (i.e., the plain \DQN), 1, 2, 4 and \GATS$+$Dyna-Q with the depth of 1 and 2 (even for this simple environment \GATS is computationally unpleasantly massive (two weeks of cpu machine for $\GATS-4$)). We use a randomly initialized Q network to learn the Q function. Figure~\ref{fig:goldfish}(c) represents the per episode return of different algorithms. Each episode has a maximum length of $100$ steps unless the agent either reaches the gold bucket (the reward of $+1$) or hit any of the sharks (the reward of $-1$). Here the discount factor is $0.99$ and cost of living is  $0.05$.

As expected, \GATS with the depth of 10 (the dimension of the grid) receives the highest return. We observe that \GATS with nonzero depth locally saves the agent and initially result in higher returns than DQN. However, in the long run, \GATS with short roll-outs (e.g. \GATS-1 and \GATS-2) degrade the performance, as seen in the later parts of the runs. Furthermore, we observe that the Dyna-Q approach also fails in improving performance. We train $\GATS{+}$Dyna-Q with both executed experiences and predicted ones. We observe that $\GATS{+}$Dyna-Q does not provide much benefit over \GATS. Consider that generated samples in the tree and real samples in $\GATS{+}$Dyna-Q are similar, resulting in repeated experiences in the replay buffer.  

\textbf{Concolusion:} For many complex applications, like hard Atari games, \GATS may require significantly longer roll-out depths with sophisticated Dyna-Q in order to perform well, which is computationally in-feasible for this study. However, the insights in designing near-perfect modeling algorithms and the extensive study of \GATS, highlight several key considerations and provide an important framework to effectively design algorithms for combining model-based and model-free reinforcement learning.

\section*{Acknowledgments}
K. Azizzadenesheli is supported in part by NSF Career Award CCF-1254106 and AFOSR YIP FA9550-15-1-0221. This research has been conducted when the first author was a visiting researcher at Stanford University and Caltech. A. Anandkumar is supported in part by Bren endowed chair, Darpa PAI, and Microsoft, Google, Adobe faculty fellowships, NSF Career Award CCF-1254106, and AFOSR YIP FA9550-15-1-0221. All the experimental study have been done using Caltech AWS credits grant.

\newpage
\bibliographystyle{apalike}
\bibliography{Master_arxiv}

\newpage
\appendix
\section*{Appendix}


\begin{figure*}[ht]
\centering
\hspace*{-1.2cm}
\begin{tabular}{cc}
\vspace*{-.3cm}
&\includegraphics[width=14.7cm,height=3.9cm]{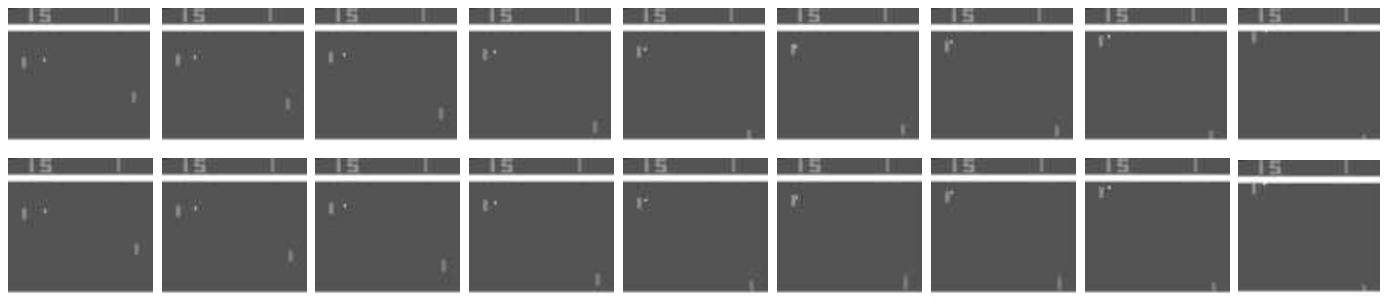}\\
\vspace*{-.7cm}
&\includegraphics[width=15.5cm,height=3.5cm]{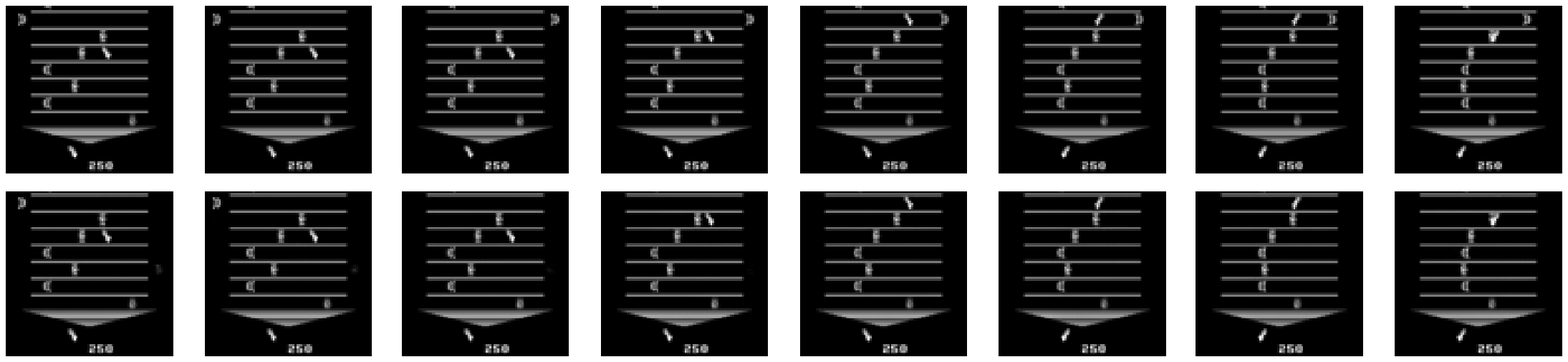}\\
\vspace*{-.6cm}
&\includegraphics[width=15.5cm,height=3.5cm]{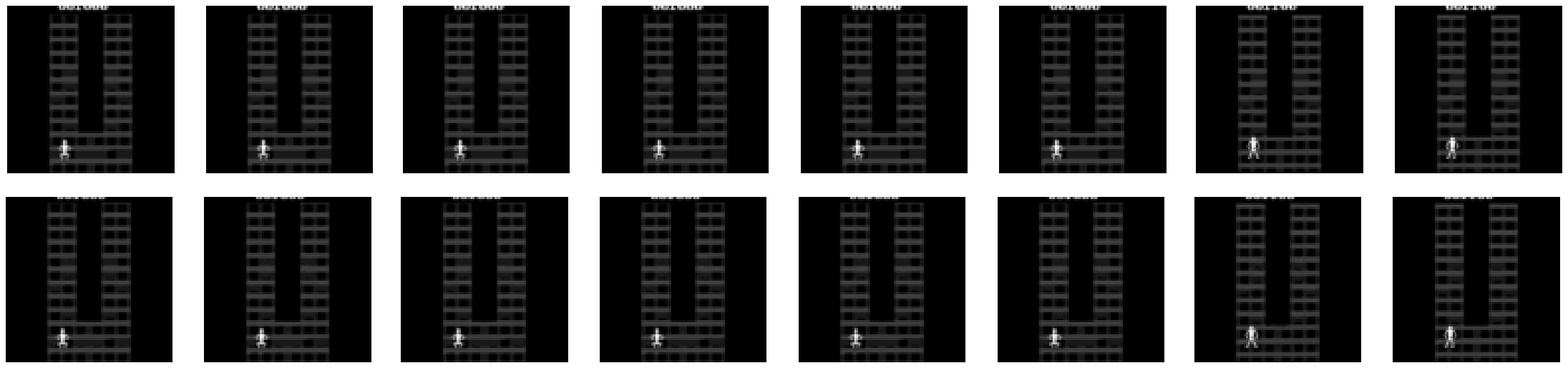}\\
\vspace*{-.2cm}
&\hspace{-.2cm}\includegraphics[width=15.2cm,height=3.5cm]{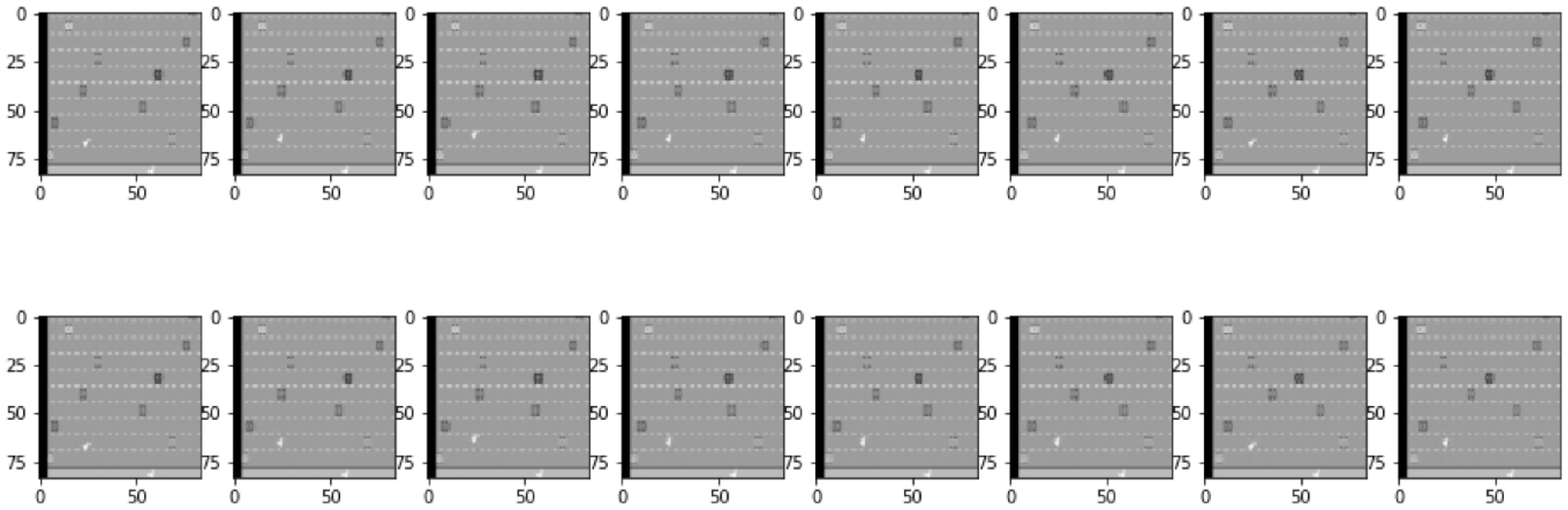}\\
&\hspace{-.2cm}\includegraphics[width=15.2cm,height=3.5cm]{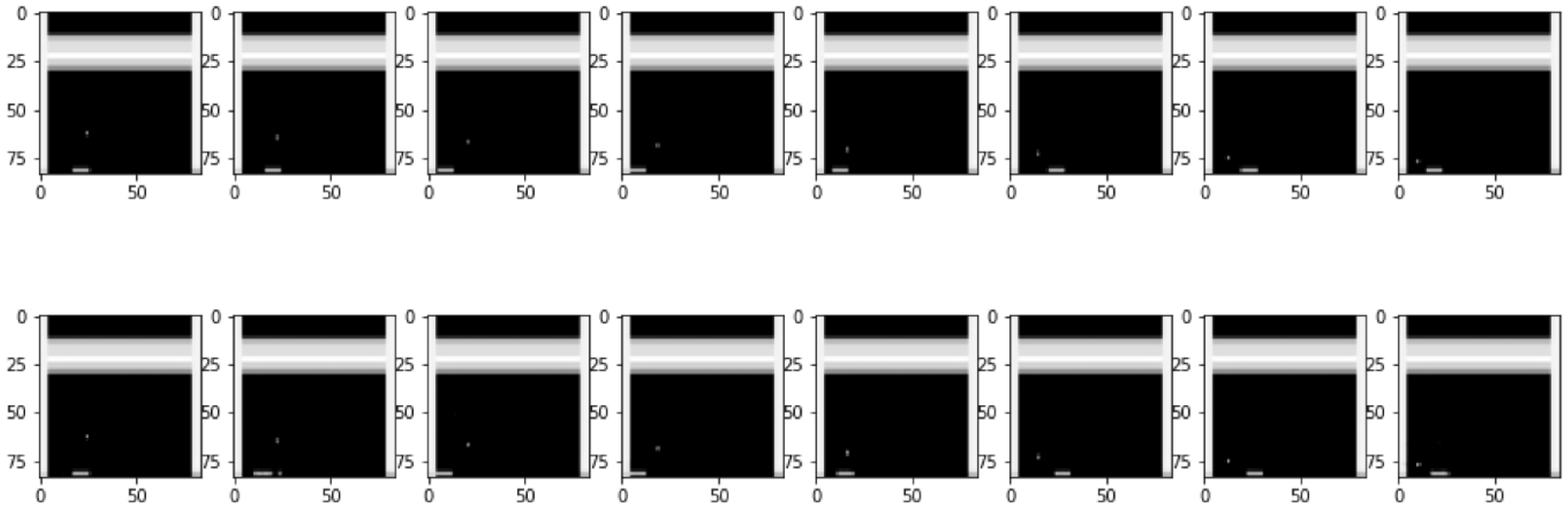}
\end{tabular}
\vspace*{-0.5cm}
\caption{On the performance of the proposed \GDM. Given four consecutive frames of Atari games, and a sequence of eight actions, \GDM generates  sequences of the future frames almost identical to the real frames. First row: A sequence of real frames. Second row: a corresponding sequence of generated frames}
  \label{fig:exp}
\vspace*{-0.5cm}
\end{figure*}

\section{Proof of Proposition  \ref{thm:BV} }\label{sec:proof}
Let's restate the estimated returns with the learned model as follows;

\begin{align*}
\E_{\piroll,\GDM,\RP}\left[\sum_{h=0}^{H-1}\gamma^h\wh r_h+\gamma^H\max_a\wh{Q}({x}_H,a)\Big| x\right]:=\!\!\!\!\!\!&
\sum_{x_i,a_i,\forall i\in[1,.,H]}\!\!\!\!\!\!\wh T(x_1|x,a_1)\piroll(a_1|x)\prod_{j=2}^{H}\wh T(x_j|x_{j-1},a_j)\piroll(a_j|x_{j-1})\\
&\quad\left(\wh r(x,a_1)+\sum_{j=2}^{H}\gamma^{j-1}\wh r(x_{j-1},a_j)+\gamma^H\max_a\wh{Q}({x}_H,a)\right)
\end{align*}
Now consider the following lemma;
\begin{lemma}[Deviation in Q-function]\label{lem:Q}
Define $e_Q$ as the uniform bound on error in the estimation of the Q function, such that $|Q(x,a)- \wh{Q}(x,a)|\leq e_Q~,~\forall x,a$. Then;
\begin{align*}
 \Big|\max_a\wh{Q}({x},a)- \max_a{Q}({x},a)\Big|\leq e_Q
\end{align*}
\end{lemma}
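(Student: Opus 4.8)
The plan is to exploit the fact that the maximum operator is a non-expansion with respect to the sup-norm, so that a uniform pointwise bound on $|Q - \wh Q|$ transfers directly to a bound on the gap between their maxima. Concretely, I would fix a state $x$ and introduce the two maximizers $a^\star \in \arg\max_a Q(x,a)$ and $\wh a^\star \in \arg\max_a \wh Q(x,a)$, then establish the claimed bound by a two-sided chain of inequalities.

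For the first direction, I would start from the estimated maximizer and write
\begin{align*}
\max_a \wh Q(x,a) = \wh Q(x,\wh a^\star) \leq Q(x,\wh a^\star) + e_Q \leq \max_a Q(x,a) + e_Q,
\end{align*}
where the first inequality is the hypothesis $|Q(x,a)-\wh Q(x,a)|\leq e_Q$ applied at $a=\wh a^\star$, and the second holds because $Q(x,\wh a^\star)\leq \max_a Q(x,a)$ by definition of the maximum. For the symmetric direction, I would start from the true maximizer and argue analogously,
\begin{align*}
\max_a Q(x,a) = Q(x,a^\star) \leq \wh Q(x,a^\star) + e_Q \leq \max_a \wh Q(x,a) + e_Q.
\end{align*}

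Combining the two displays yields $-e_Q \leq \max_a \wh Q(x,a) - \max_a Q(x,a) \leq e_Q$, which is exactly $|\max_a \wh Q(x,a) - \max_a Q(x,a)|\leq e_Q$; since $x$ was arbitrary, the bound holds uniformly. The only mild subtlety I would flag is the existence of the maximizers $a^\star$ and $\wh a^\star$: over a finite action space $\A$ this is immediate, and otherwise one replaces the attained maxima by suprema and runs the same estimates up to an arbitrarily small $\delta$, letting $\delta\to 0$ at the end. There is no genuine obstacle here — the result is a routine consequence of the $1$-Lipschitzness of $\max$ in the infinity norm — so the main care is simply to keep both directions of the inequality and to invoke the pointwise hypothesis at the correct action in each.
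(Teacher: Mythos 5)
Your proof is correct and follows essentially the same argument as the paper: both proofs fix the two maximizers $\arg\max_a \wh Q(x,a)$ and $\arg\max_a Q(x,a)$, apply the pointwise bound $|Q-\wh Q|\leq e_Q$ at the appropriate maximizer in each direction, and use the fact that evaluating at the other function's maximizer can only underestimate its own maximum. The paper phrases each direction as an add-and-subtract decomposition while you chain inequalities directly, but these are the same estimates in a different order; your remark about suprema in infinite action spaces is a minor extra refinement the paper does not address.
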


\begin{proof}
For a given state $x$, define $\wt a_1(x):=\arg\max_a\wh Q(x,a)$ and $\wt a_2(x):=\arg\max_a Q(x,a)$. Then;
\begin{align*}
\wh Q(x,\wt a_1(x))- Q(x,\wt a_2(x))&=\wh Q(x,\wt a_1(x))- Q(x,\wt a_1(x))+ Q(x,\wt a_1(x)) -Q(x,\wt a_2(x))\nonumber\\
&\leq \wh Q(x,\wt a_1(x)) -Q(x,\wt a_1(x))\leq e_Q
\end{align*}
since $Q(x,\wt a_1(x)) -Q(x,\wt a_2(x))\leq 0$. 

With similar argument, we have;
\begin{align*}
 \wh Q(x,\wt a_1(x))- Q(x,\wt a_2(x))&= \wh Q(x,\wt a_1(x))- \wh Q(x,\wt a_2(x))+ \wh Q(x_H,\wt a_2(x)) -Q(x,\wt a_2(x))\nonumber\\
&\geq \wh Q(x,\wt a_2(x))-Q(x,\wt a_2(x))\geq -e_Q
\end{align*}
since $\wh Q(x,\wt a_1(x))- \wh Q(x,\wt a_2(x))\geq 0$. Therefore;
\begin{align*}
 -e_Q\leq\wh Q(x,\wt a_1(x))- Q(x,\wt a_2(x))\leq e_Q
\end{align*}
resulting in Lemma~\ref{lem:Q}.
\end{proof}

In the remaining proof, we repeatedly apply the addition and subtraction technique to upper bound the error. To show how we apply this technique, we illustrate how we derive the error terms $T(x_1|x,a_1) - \wh T(x_1|x, a_1)$ and $r(x,a_1)-\wh r(x,a_1)$ for the first time step in detail. 


Let us restate the objective that we desire to upper bound:

\begin{align}\label{eq:error_main}
&\!\!\!\!\!\!\!\!\Big|\E_{\piroll}\left[\sum_{h=0}^{H-1}\gamma^hr_h+\gamma^H\max_a{Q}({x}_H,a)\Big| x\right]-\E_{\piroll,\GDM,\RP}\left[\sum_{h=0}^{H}\gamma^h\wh r_h+\gamma^H\max_a\wh{Q}({x}_H,a)\Big| x\right]\Big|\nonumber\\
&\!\!\!\!\!\!\!\!=\Big|\!\!\!\!\!\!
\sum_{x_i,a_i,\forall i\in[1,.,H]}\!\!\!\!\!\!\!\!\!\!\!\! T(x_1|x,a_1)\piroll(a_1|x)\prod_{j=2}^{H} T(x_j|x_{j-1},a_j)\piroll(a_j|x_{j-1})
\left( r(x,a_1)+\sum_{j=2}^{H}\gamma^{j-1} r(x_{j-1},a_j)+\gamma^H\max_a{Q}({x}_H,a)\right)\nonumber\\
&\!\!\!\!\!\!\!\!-\!\!\!\!\!\!
\sum_{x_i,a_i,\forall i\in[1,.,H]}\!\!\!\!\!\!\!\!\!\!\!\!\wh T(x_1|x,a_1)\piroll(a_1|x)\prod_{j=2}^{H}\wh T(x_j|x_{j-1},a_j)\piroll(a_j|x_{j-1})
\left(\wh r(x,a_1)+\sum_{j=2}^{H}\gamma^{j-1}\wh r(x_{j-1},a_j)+\gamma^H\max_a\wh{Q}({x}_H,a)\right)\Big|
\end{align}

Then, we add and subtract the following term. 
\begin{align*}
\sum_{x_i,a_i,\forall i\in[1,.,H]}\!\!\!\!\!\!\!\!\!\!\!\! \wh T(x_1|x,a_1)\piroll(a_1|x)\prod_{j=2}^{H} T(x_j|x_{j-1},a_j)\piroll(a_j|x_{j-1})
\left( r(x,a_1)+\sum_{j=2}^{H}\gamma^{j-1} r(x_{j-1},a_j)+\gamma^H\max_a{Q}({x}_H,a)\right)
\end{align*}
Notice this term differs from the first term of Eq.~\ref{eq:error_main} just in the transition kernel of the first time step, i.e., $T(x_1|x,a_1) \xrightarrow{}\wh T(x_1|x,a_1)$. Thus, we have;
\begin{align} \label{eq:ap1}
&\!\!\!\!\!\Big|\E_{\piroll}\left[\sum_{h=0}^{H-1}\gamma^hr_h+\gamma^H\max_a{Q}({x}_H,a)\Big| x\right]-\E_{\piroll,\GDM,\RP}\left[\sum_{h=0}^{H}\gamma^h\wh r_h+\gamma^H\max_a\wh{Q}({x}_H,a)\Big| x\right]\Big|\nonumber\\
&\!\!\!\!\!=\Big|\!\!\!\!\!\!
\sum_{x_i,a_i,\forall i\in[1,.,H]}\!\!\!\!\!\!\!\!\!\!\!\! T(x_1|x,a_1)\piroll(a_1|x)\prod_{j=2}^{H} T(x_j|x_{j-1},a_j)\piroll(a_j|x_{j-1})\left( r(x,a_1)+\sum_{j=2}^{H}\gamma^{j-1} r(x_{j-1},a_j)+\gamma^H\max_a{Q}({x}_H,a)\right)\nonumber\\
&\!\!\!\!\!-\!\!\!\!\!\!
\sum_{x_i,a_i,\forall i\in[1,.,H]}\!\!\!\!\!\!\!\!\!\!\!\! \wh T(x_1|x,a_1)\piroll(a_1|x)\prod_{j=2}^{H} T(x_j|x_{j-1},a_j)\piroll(a_j|x_{j-1})
\left( r(x,a_1)+\sum_{j=2}^{H}\gamma^{j-1} r(x_{j-1},a_j)+\gamma^H\max_a{Q}({x}_H,a)\right)\nonumber\\
&\!\!\!\!\!+\!\!\!\!\!\!
\sum_{x_i,a_i,\forall i\in[1,.,H]}\!\!\!\!\!\!\!\!\!\!\!\! \wh T(x_1|x,a_1)\piroll(a_1|x)\prod_{j=2}^{H} T(x_j|x_{j-1},a_j)\piroll(a_j|x_{j-1})
\left( r(x,a_1)+\sum_{j=2}^{H}\gamma^{j-1} r(x_{j-1},a_j)+\gamma^H\max_a{Q}({x}_H,a)\right)\nonumber\\
&\!\!\!\!\!-\!\!\!\!\!\!
\sum_{x_i,a_i,\forall i\in[1,.,H]}\!\!\!\!\!\!\!\!\!\!\!\!\wh T(x_1|x,a_1)\piroll(a_1|x)\prod_{j=2}^{H}\wh T(x_j|x_{j-1},a_j)\piroll(a_j|x_{j-1})\left(\wh r(x,a_1)+\sum_{j=2}^{H}\gamma^{j-1}\wh r(x_{j-1},a_j)+\gamma^H\max_a\wh{Q}({x}_H,a)\right)\Big|
\end{align}

We derive the error term $T(x_1|x,a_1) - \wh T(x_1|x, a_1)$ from the first two terms of Eq. ~\ref{eq:ap1}. Notice that all the parameters of the first two terms are the same except the transition kernel for the first state. We can thus refactor the first two terms of Eq.~\ref{eq:ap1} as:

\begin{align*}
\sum_{x_i,a_i,\forall i\in[1,.,H]}\!\!\!\!\!\!\!\!\!\!\!\! \left(T(x_1|x,a_1) - \wh T(x_1|x, a_1)\right)&\piroll(a_1|x)\prod_{j=2}^{H} T(x_j|x_{j-1},a_j)\piroll(a_j|x_{j-1})\\
&\left( r(x,a_1)+\sum_{j=2}^{H}\gamma^{j-1} r(x_{j-1},a_j)+\gamma^H\max_a{Q}({x}_H,a)\right)
\end{align*}

We then expand the third and fourth terms of Eq.~\ref{eq:ap1} to derive the error term $r(x, a_1) - \wh r(x, a_1)$ and a remainder. To do this, we add and subtract the following term which is the same as the third term in Eq.~\ref{eq:ap1} except it differs in the reward of the first time step:

\begin{align*}
    &\!\!\!\!\!
\sum_{x_i,a_i,\forall i\in[1,.,H]}\!\!\!\!\!\! \wh T(x_1|x,a_1)\piroll(a_1|x)\prod_{j=2}^{H} T(x_j|x_{j-1},a_j)\piroll(a_j|x_{j-1})\left( \wh r(x,a_1)+\sum_{j=2}^{H}\gamma^{j-1} r(x_{j-1},a_j)+\gamma^H\max_a{Q}({x}_H,a)\right)
\end{align*}


We can thus express the third and fourth terms of Eq.~\ref{eq:ap1} along with the addition and subtraction terms as:
\begin{align} \label{eq:ap2}
&\!\!\!\!\!
\sum_{x_i,a_i,\forall i\in[1,.,H]}\!\!\!\!\!\!\!\!\!\!\!\! \wh T(x_1|x,a_1)\piroll(a_1|x)\prod_{j=2}^{H} T(x_j|x_{j-1},a_j)\piroll(a_j|x_{j-1})
\left( r(x,a_1)+\sum_{j=2}^{H}\gamma^{j-1} r(x_{j-1},a_j)+\gamma^H\max_a{Q}({x}_H,a)\right)\nonumber\\
&\!\!\!\!\!
-\!\!\!\!\!\!\sum_{x_i,a_i,\forall i\in[1,.,H]}\!\!\!\!\!\!\!\!\!\!\!\! \wh T(x_1|x,a_1)\piroll(a_1|x)\prod_{j=2}^{H} T(x_j|x_{j-1},a_j)\piroll(a_j|x_{j-1})\left( \wh r(x,a_1)+\sum_{j=2}^{H}\gamma^{j-1} r(x_{j-1},a_j)+\gamma^H\max_a{Q}({x}_H,a)\right)\nonumber\\
&\!\!\!\!\!
+\!\!\!\!\!\!\sum_{x_i,a_i,\forall i\in[1,.,H]}\!\!\!\!\!\!\!\!\!\!\!\! \wh T(x_1|x,a_1)\piroll(a_1|x)\prod_{j=2}^{H} T(x_j|x_{j-1},a_j)\piroll(a_j|x_{j-1})
\left( \wh r(x,a_1)+\sum_{j=2}^{H}\gamma^{j-1} r(x_{j-1},a_j)+\gamma^H\max_a{Q}({x}_H,a)\right)\nonumber\\
&\!\!\!\!\!-\!\!\!\!\!\!
\sum_{x_i,a_i,\forall i\in[1,.,H]}\!\!\!\!\!\!\!\!\!\!\!\!\wh T(x_1|x,a_1)\piroll(a_1|x)\prod_{j=2}^{H}\wh T(x_j|x_{j-1},a_j)\piroll(a_j|x_{j-1})
\left(\wh r(x,a_1)+\sum_{j=2}^{H}\gamma^{j-1}\wh r(x_{j-1},a_j)+\gamma^H\max_a\wh{Q}({x}_H,a)\right)
\end{align}

Notice that the first two terms in Eq.~\ref{eq:ap2} are the same except in in the first reward term, from which we derive the error term $ r(x,a_1)-\wh r(x,a_1)$. We refactor the first two terms in Eq.~\ref{eq:ap2} as: 

\begin{align*}
&\!\!\!\!\!\!
\sum_{x_i,a_i,\forall i\in[1,.,H]}\!\!\!\!\!\!\!\!\!\!\!\! (r(x,a_1) - \wh r(x, a_1)) \wh T(x_1|x,a_1)\piroll(a_1|x)\prod_{j=2}^{H} T(x_j|x_{j-1},a_j)\piroll(a_j|x_{j-1})\\
&\!\!\!\!\!\!
\end{align*}

Finally, we have the remaining last two terms of Eq.~\ref{eq:ap2}.
\begin{align*}
&\!\!\!\!\!\sum_{x_i,a_i,\forall i\in[1,.,H]}\!\!\!\!\!\!\!\!\!\!\!\! \wh T(x_1|x,a_1)\piroll(a_1|x)\prod_{j=2}^{H} T(x_j|x_{j-1},a_j)\piroll(a_j|x_{j-1})
\left( \wh r(x,a_1)+\sum_{j=2}^{H}\gamma^{j-1} r(x_{j-1},a_j)+\gamma^H\max_a{Q}({x}_H,a)\right)\\
&\!\!\!\!\!-\!\!\!\!\!\!
\sum_{x_i,a_i,\forall i\in[1,.,H]}\!\!\!\!\!\!\!\!\!\!\!\!\wh T(x_1|x,a_1)\piroll(a_1|x)\prod_{j=2}^{H}\wh T(x_j|x_{j-1},a_j)\piroll(a_j|x_{j-1})
\left(\wh r(x,a_1)+\sum_{j=2}^{H}\gamma^{j-1}\wh r(x_{j-1},a_j)+\gamma^H\max_a\wh{Q}({x}_H,a)\right)
\end{align*}

We repeatedly expand this remainder for the following time steps using the same steps as described above to derive the full bound. Following this procedure, we have:

\begin{align*}
&\Big|\E_{\piroll}\left[\sum_{h=0}^{H-1}\gamma^hr_h+\gamma^H\max_a{Q}({x}_H,a)\Big| x\right]-\E_{\piroll,\GDM,\RP}\left[\sum_{h=0}^{H}\gamma^h\wh r_h+\gamma^H\max_a\wh{Q}({x}_H,a)\Big| x\right]\Big|\\
&\leq\sum_{x_i,a_i,\forall i\in[H]}\left|T(x_1|x,a_1)-\wh{T}({x}_1|x,a_1)\right|\piroll(a_1|x)\\
&\quad\quad\quad\quad\quad\left(r(x,a_1)+\sum_{j=2}^{H}\gamma^{j-1}r(x_{j-1},a_j)+\gamma^H\max_a{Q}({x}_H,a)\right)\prod_{j=2}^{H}T(x_j|x_{j-1},a_j)\piroll(a_j|x_{j-1})\nonumber\\
&+\sum_{x_i,a_i,\forall i\in[H]}\wh{T}({x}_1|x,a_1)\piroll(a_1|x)\left(|r(x,a_1)-\wh r(x,a_1)|\right)\prod_{j=2}^{H}T(x_j|x_{j-1},a_j)\piroll(a_j|x_{j-1})\nonumber\\
&\quad\quad+\sum_{j=2}^{H}\sum_{x_h,a_h,\forall i\in[H]}\!\!\!\wh{T}({x}_1|x,a_1)\piroll(a_1|x)\Big|T(x_j|x_{j-1},a_j)-\!\wh{T}({x}_j|x_{j-1},a_j)\Big|\nonumber\\
&\quad\quad\quad\quad\quad\quad\quad\quad\left(\sum_{h=j+1}^{H}\gamma^{h-1} r(x_{h-1},a_h)+\gamma^H\max_a{Q}({x}_H,a)\right)\\
&\quad\quad\quad\quad\quad\quad\quad\quad\prod_{h=2}^{j-1}\wh T({x}_h|x_{h-1},a_h)\piroll(a_h|{x}_{h-1})\prod_{h=j+1}^{H}T(x_h|x_{h-1},a_h)\piroll(a_h|x_{h-1})\\
&\quad\quad+\sum_{j=2}^{H}\gamma^{j-1}\sum_{x_h,a_h,\forall i\in[H]}\wh{T}({x}_1|x,a_1)\piroll(a_1|x)\left(\Big|r(x_{j-1},a_{j})-\wh r(x_{j-1},a_{j})\Big|\right)\\
&\quad\quad\quad\quad\quad\quad\quad\quad\prod_{h=2}^{j}\wh T({x}_h|x_{h-1},a_h)\piroll(a_h|{x}_{h-1})\prod_{h=j+1}^{H}T(x_h|x_{h-1},a_h)\piroll(a_h|x_{h-1})\\
&+\sum_{x_h,a_h,\forall i\in[H]}\wh{T}({x}_1|x,a_1)\piroll(a_1|x)\prod_{h=2}^{H}\wh T({x}_h|x_{h-1},a_h)\piroll(a_h|x_{h-1})\gamma^{H}\Big|\max_a{Q}({x}_H,a)-\max_a\wh{Q}({x}_H,a))\Big|\\
\end{align*}

As a result,
\begin{align*}
\Big|\E_{\piroll}\left[\sum_{h=0}^{H-1}\gamma^hr_h+\gamma^H\max_a{Q}({x}_H,a)\Big| x\right]&-\E_{\piroll,\GDM,\RP}\left[\sum_{h=0}^{H-1}\gamma^h\wh r_h+\gamma^H\max_a\wh{Q}({x}_H,a)\Big| x\right]\Big|\\
&\leq
\!\sum_{i=1}^{H}\!\! \gamma^{i-1}\frac{1-\gamma^{H+1-i}}{1-\gamma}  e_T+ \!\!\sum_{i=1}^{H}\!\!\gamma^{i-1} e_R\!+\gamma^He_Q\\
&\leq\!\frac{1-\gamma^{H}+H\gamma^{H}(1-\gamma)}{(1-\gamma)^2}e_T+\frac{1-\gamma^{H}}{1-\gamma}e_R+\gamma^He_Q
\vspace*{-0.4cm}
\end{align*}
Therefore, the proposition follows.\\

\vspace*{-0.6cm}

\section{Bias-Variance in Q function}\label{apx:BV}
To observe the existing bias and variance in $Q_{\theta}$, we run solely \DQN on the game Pong, for $20M$ frame steps. Fig.~\ref{fig:pongQ} shows 4 consecutive frames where the agent receives a negative score and Table.~\ref{fig:pongQ} shows the estimated Q values by \DQN for these steps. As we observe in  Fig.~\ref{fig:pongQ} and Table.~\ref{fig:pongQ}, at the time step $t$, the estimated Q value of all the actions are almost the same. The agent takes the \textit{down} action and the environment goes to the next state $t+1$. The second row of Table.~\ref{fig:pongQ} expresses the Q value of the actions at this new state. Since this transition does not carry any reward and the discount factor is close to 1, ($\gamma=0.99$) we expect the max Q values at time step $t+1$ to be close the Q values of action \textit{down}, but it is very different.
\begin{figure}[ht]
\hspace{0.5cm}
\vspace*{-0.0cm}
   \begin{minipage}{0.1\textwidth}
\includegraphics[width=1\textwidth]{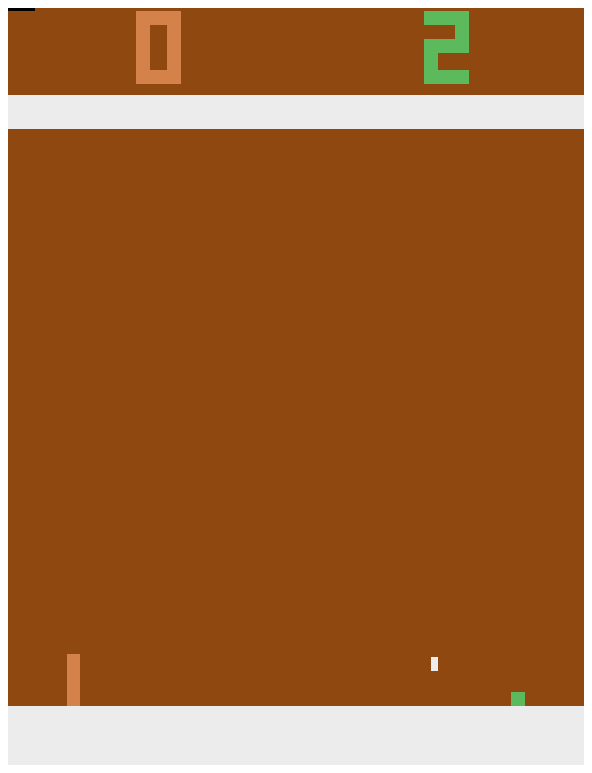}
\centering
   \end{minipage}
\begin{minipage}{0.1\textwidth}
\includegraphics[width=1\textwidth]{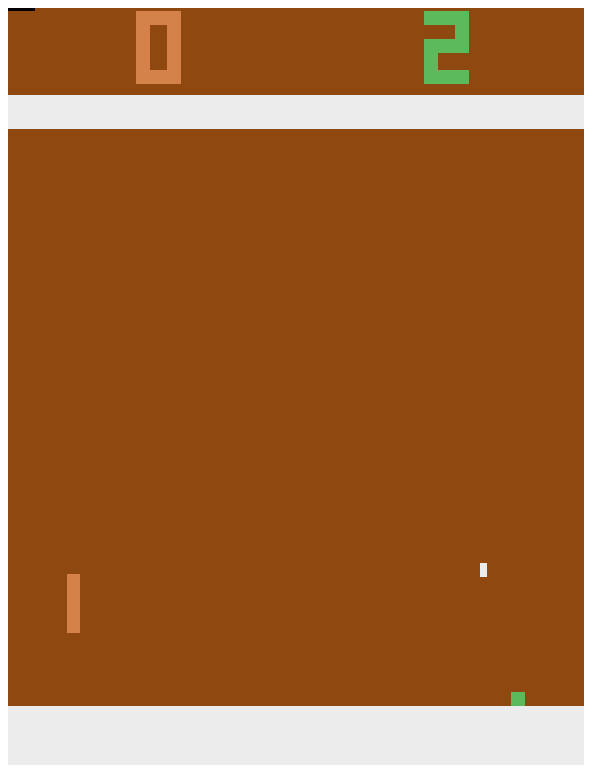}
\centering
\end{minipage}
\begin{minipage}{0.1\textwidth}
\includegraphics[width=1\textwidth]{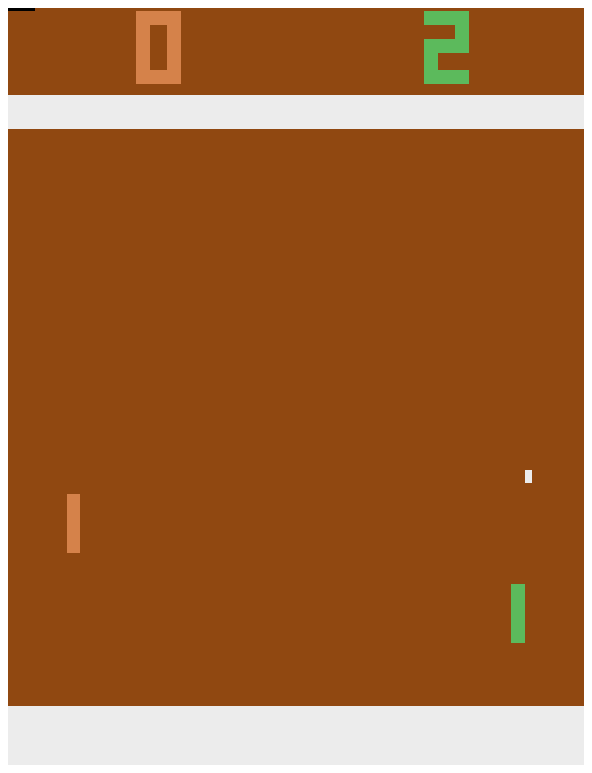}
\centering
\end{minipage}
\begin{minipage}{0.1\textwidth}
\includegraphics[width=1\textwidth]{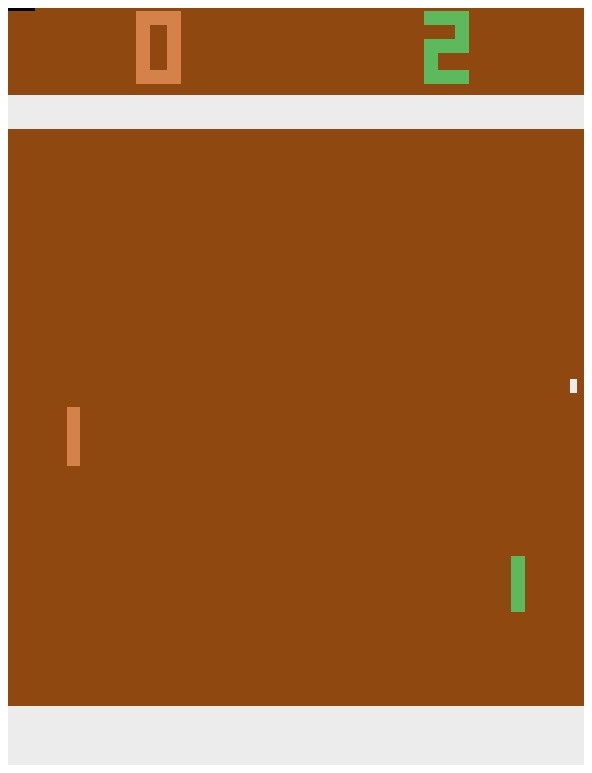}
\centering
\end{minipage}
\hspace*{2.5cm}
\begin{minipage}{0.1\textwidth}
  \centering
  \footnotesize
  \begin{tabular}{l|cccr|rr}
    \multicolumn{1}{c}{}&\multicolumn{3}{c}{Action space} \\
    \toprule
Steps&stay&up&down\\
\midrule
$t$& 4.3918 & 4.3220 & 4.3933\\
$t+1$& 2.7985 & 2.8371 &2.7921\\
$t+2$&2.8089 & 2.8382 &2.8137\\
$t+3$& 3.8725 &3.8795 & 3.8690\\
    \bottomrule
  \end{tabular}
\end{minipage}\hfill 
\caption{The sequence of four consecutive decision states, and corresponding learned Q-function by \DQN at $t,t+1,t+2,t+3$ from left to right, where the agent loses the point. At time step $t$, the optimal action is \textit{up} but the Q-value of going up is lower than other actions. More significantly, even though the agent chooses action \textit{down} and goes down, the Q value of action down at time step $t$ is considerably far from the maximum Q value of the next state at time step $t+1$.}
   \label{fig:pongQ}
\end{figure}
Moreover, in Fig.~\ref{fig:pong-bias} and Table.~\ref{fig:pong-bias} we investigate the case that the agent catches the ball. The ball is going to the right and agent needs to catch it. At time step $t$, the paddle is not on the direction of ball velocity, and as shown in Table.~\ref{fig:pong-bias}, the optimal action is \textit{down}. But a closer look at the estimated Q value of action \textit{up} reveals that the Q value for both action \textit{up} is unreasonably close, when it could lead to losing the point.
\begin{figure}[ht]
\hspace{2.cm}
\vspace*{-0.0cm}
   \begin{minipage}{0.1\textwidth}
\includegraphics[width=1\textwidth]{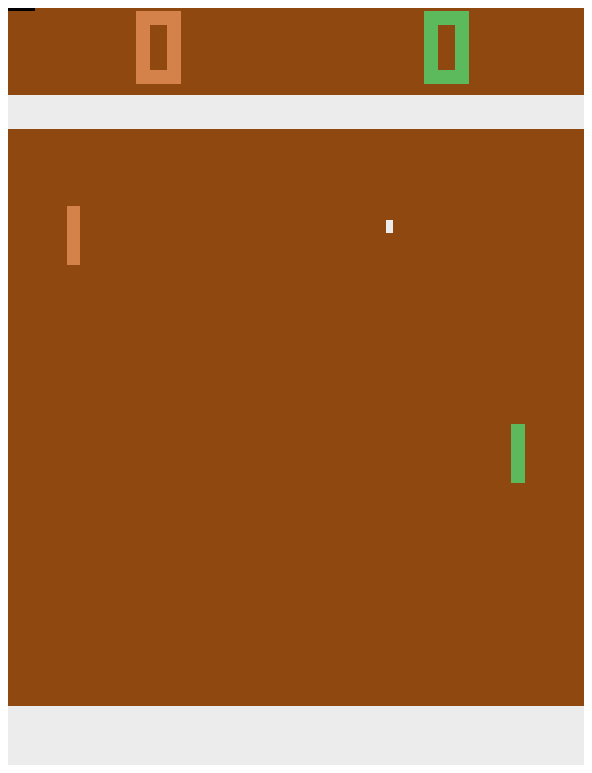}
\centering
 \end{minipage}
\begin{minipage}{0.1\textwidth}
\includegraphics[width=1\textwidth]{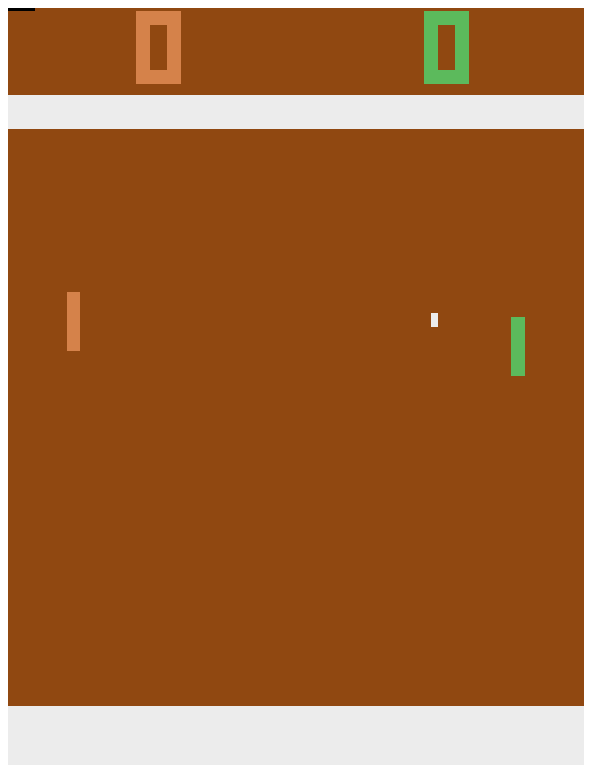}
\centering
\end{minipage}
\hspace*{3.8cm}
\begin{minipage}{0.1\textwidth}
  \centering
  \footnotesize
  \begin{tabular}{l|cccr|rr}
    \multicolumn{1}{c}{}&\multicolumn{3}{c}{Action space} \\
    \toprule
Steps&stay&up&down\\
\midrule
$t$& 1.5546 & 4.5181&   4.5214\\
    \bottomrule
  \end{tabular}
\end{minipage}\hfill
\caption{States at $t-1\rightarrow t$ and the corresponding Q function learned through \DQN at time $t$. Action \textit{up} is sub-optimal but has high value and considerably close to action \textit{down}. While action \textit{down} and \textit{stay} show have more similar values than \textit{up} and \textit{down}}
   \label{fig:pong-bias}
\end{figure}
Lastly, we studied the existing errors in the estimation of the Q function using \DQN. In Table.\ref{fig:pongQ}, if the agent could roll-out even one step before making a decision, it could observe negative consequence of action \textit{down}. The positive effect of the roll-out is more significant in earlier stages of Q learning, where the Q estimation is more off.

\section{\GATS on Pong}\label{apx:pong}

We run \GATS with $1,2,3$, and $4$ steps lookahead ($GATS1,GATS2,GATS3,GATS4$) and after extensive hyper parameter tuning for the \DQN model we show the \GATS performance improvement over \DQN in ~Fig.~\ref{fig:gatsperformance}~\textit{(left)}. 
~Fig.~\ref{fig:gatsperformance}~\textit{(right)} shows the \RP prediction accuracy. We observe that when the transition phase occurs at decision step $1$M, the \RP model mis-classifies the positive rewards. But the \RP rapidly adapts to this shift and reduces the classification error to less than 2 errors per episode.

As DRL methods are data hungry, we can re-use the data to efficiently learn the model dynamics. Fig.~\ref{fig:exp} shows how accurate the \GDM can generate next $9$ frames just conditioning on the first frame and the trajectory of actions. This trajectory is generated at decision step $100k$. Moreover, we extend our study to the case where we change the model dynamics by changing the game mode. In this case, by going from default mode to alternate mode in pong, the opponent paddle gets halved in size. We expected that in this case, where the game became easier, the \DDQN agent would preserve its performance but surprisingly it gave us the most negative score possible, i.e -21 and broke. Therefore, we start fine tuning \DDQN and took 3M time step (12M frame) to master the game again. It is worth noting that it takes \DDQN 5M time step (20M frame) to master from scratch. While \DDQN shows a vulnerable and undesirably breakable behaviour to this scenario, \GDM and \RP thanks to their detailed design, adapt to the new model dynamics in 3k samples, which is amazingly smaller (see more details in \ref{apx:DA})
\begin{figure}[ht]
\vspace*{-0.3cm}
   \begin{minipage}{1\textwidth}
   \centering
\includegraphics[width=0.4\linewidth]{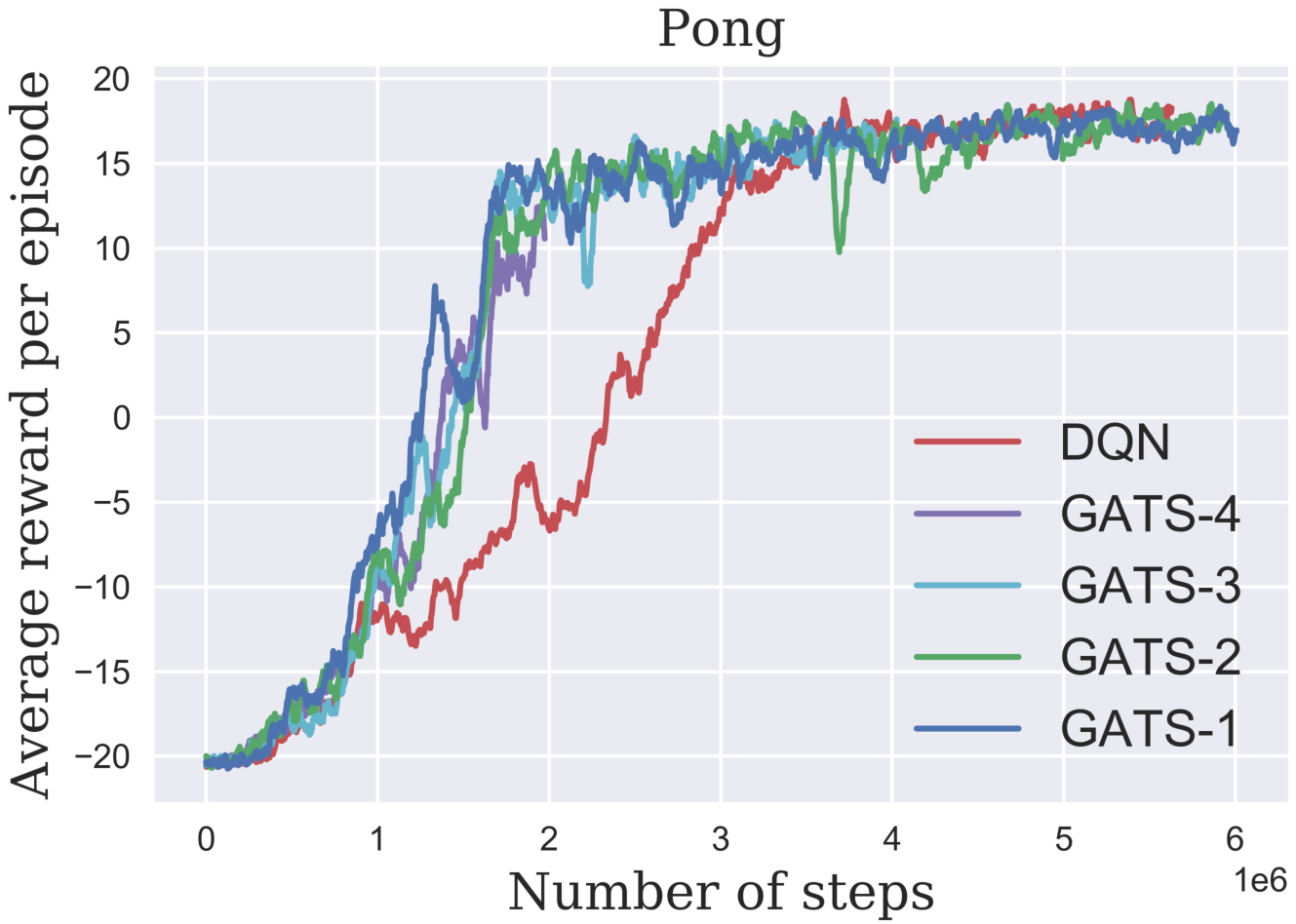}
\includegraphics[width=0.4\linewidth]{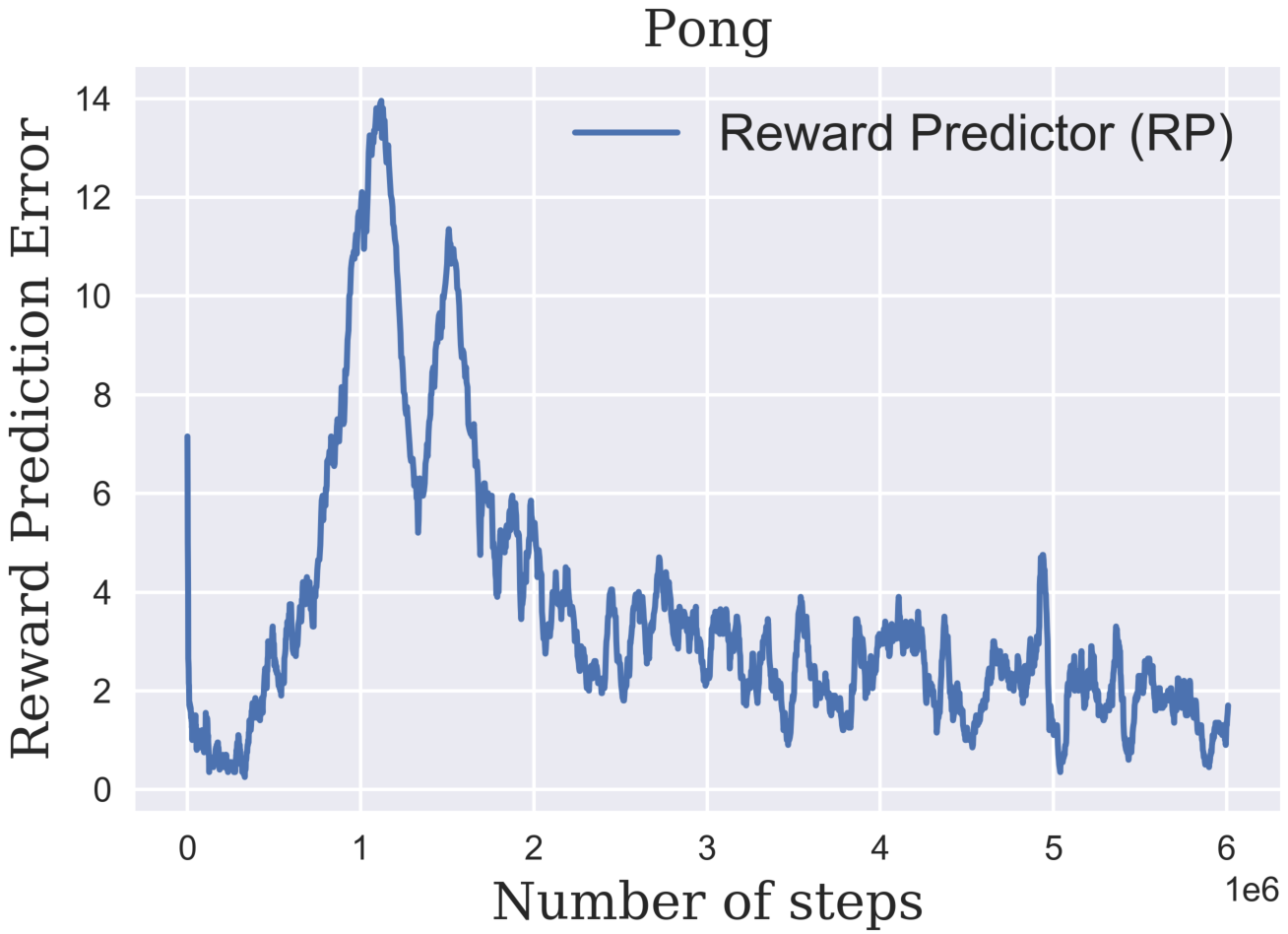}
   \end{minipage} 
\vspace*{-0.2cm}
\caption{\textit{left}:\GATS learns a better policy faster than plain \DQN ($2$ times faster). \GATS$k$ denotes \GATS of depth $k$. \textit{right}: Accuracy of \RP. The $Y$ axis shows the number of mistakes per episode and each episode has average length of $2k$, so the acc is almost always around $99.8\%$. This accuracy is consistent among runs and different lookahead lengths.}

\label{fig:gatsperformance}
\end{figure}
%
%
%

%
In addition to \GATS on \DQN, we also study two other set of experiments on \DDQN. Since Fig.~\ref{fig:gatsperformance} shows that the deeper roll-outs beyond one step do not provide much additional benefit for Pong, we focus on one-step roll-outs for the next two experiments. 
In the first experiment, we equip $\GATS+\DDQN$ with the mentioned Wasserstein-optimism approach, and compare it with \DDQN and plain $\GATS+\DDQN$, which both use $\varepsilon$-greedy based approaches for exploration. In Fig.~\ref{fig:ddqn}\textit{left},~ we observe that this optimism heuristic is helpful for better exploration.

In the second experiment, we investigate the effect of prioritizing training samples for the \GDM, fresher samples are more probable to be chosen, which we do in all experiments reported in Fig.~\ref{fig:ddqn}\textit{left}. We study the case where the input samples to \GDM are instead chosen uniformly at random from the replay buffer in  Fig.~\ref{fig:ddqn}\text{right}. In this case the \GATS learns a better policy faster at the beginning of the game, but the performance stays behind \DDQN, due to the shift in the state distribution. It is worth mentioning that for optimism based exploration, there is no $\varepsilon$-greedy, which is why it gets close to the maximum score of $21$. We tested \DDQN and \GATS-\DDQN with $\varepsilon=0$, and they also perform close to $21$. We further extend the study of \GDM to more games \ref{fig:exp} and observed same robust behaviour as Pong. We also tried to apply \GATS to more games, but were not able to extend it due to mainly its high computation cost. We tried different strategies of storing samples generated by MCTS, e.g. random generated experience, trajectory followed by Q on the tree, storing just leaf nodes, max leaf, also variety of different distributions, e.g. geometric distributing, but again due the height cost of hyper parameter tuning we were not successful to come up with a setting that \GATS works for other games

\begin{figure}[h]
\vspace*{-0.2cm}
   \begin{minipage}{1\textwidth}
   \centering
\includegraphics[width=0.4\linewidth]{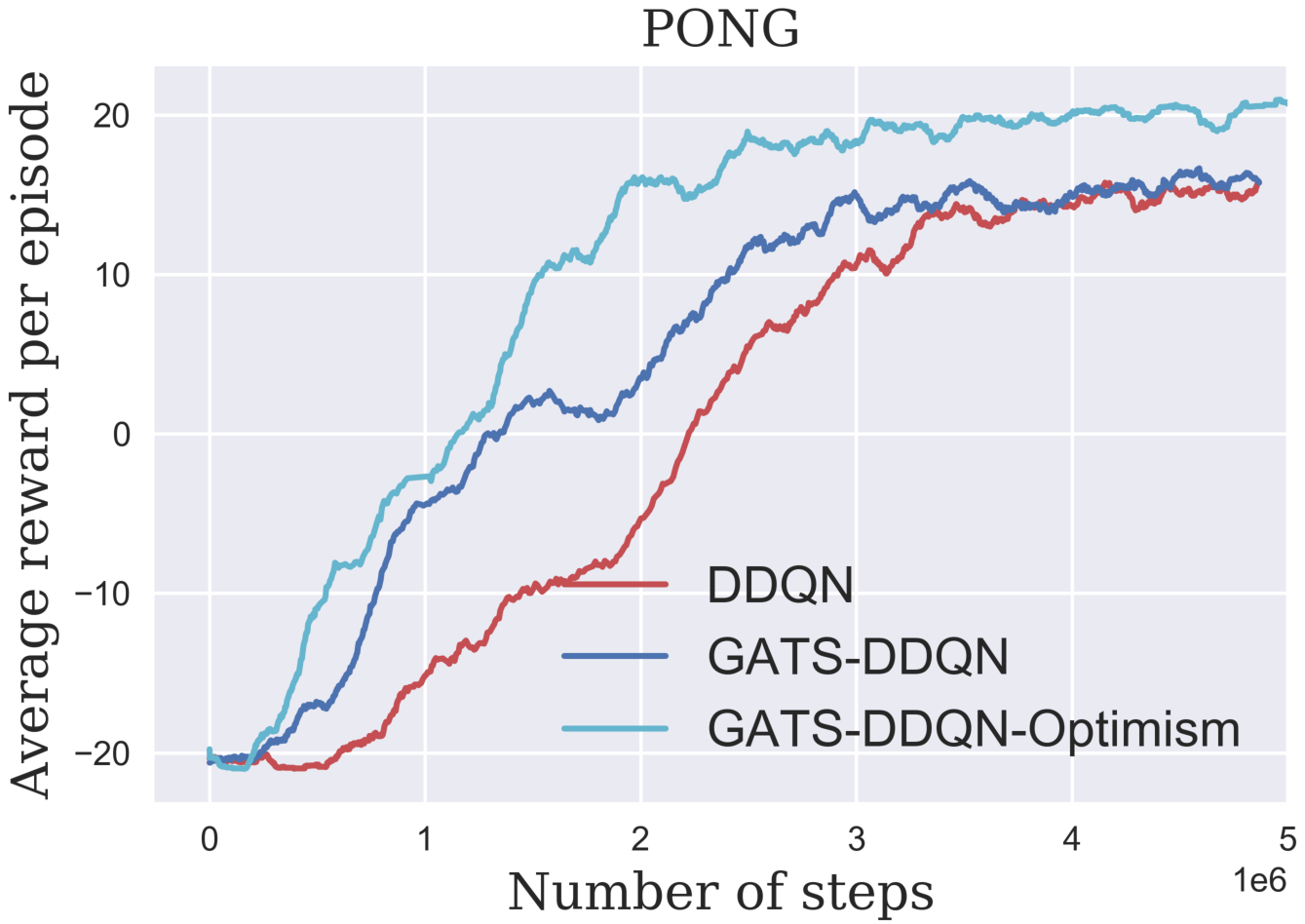}
\includegraphics[width=0.4\linewidth]{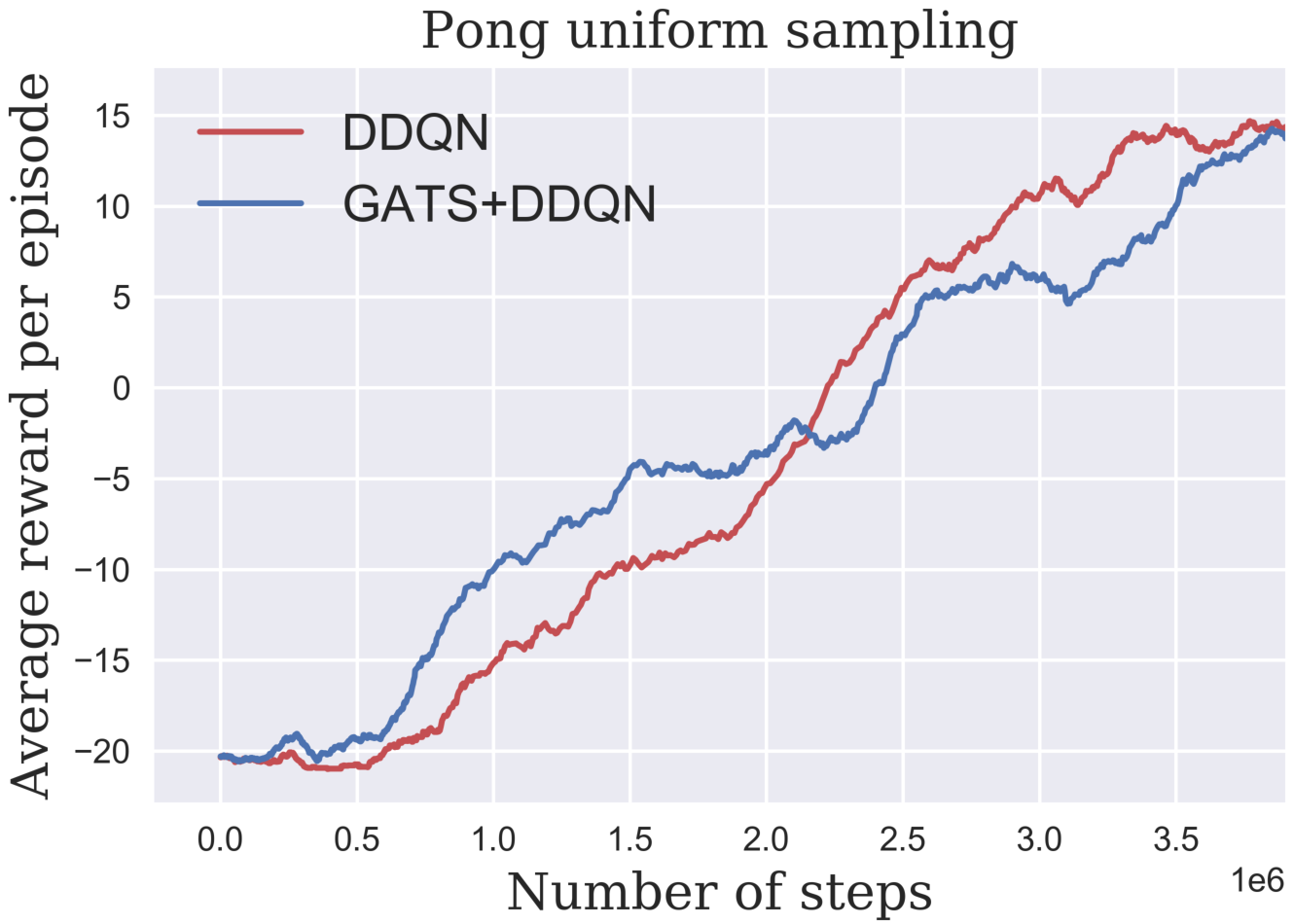}

   \end{minipage} 
\caption{\textit{left}:The optimism approach for \GATS improves the sample complexity and learns a better policy faster. \textit{right}: Sampling the replay buffer uniformly at random to train \GDM, makes \GDM slow to adapt to novel parts of state space.}
\label{fig:ddqn}
\end{figure}

\section{Asterix and Breakout Negative Results}\label{apx:negav}
We include the results for \GATS with 1 step look-ahead (\GATS-1) and compare its performance to DDQN as an example for the negative results we obtained with short roll-outs with the \GATS algorithm. While apply the same hyper parameters we tuned for pong, for Asterix results in performance slightly above random policy, we re-do the hyper-parameter tuning specifically for this game again and Fig.~\ref{fig:Asterix-1} is the best performance we achieved. 

This illustrates the challenges of learning strong global policies with short roll-outs even with near-perfect modeling.

\begin{figure}[ht!]
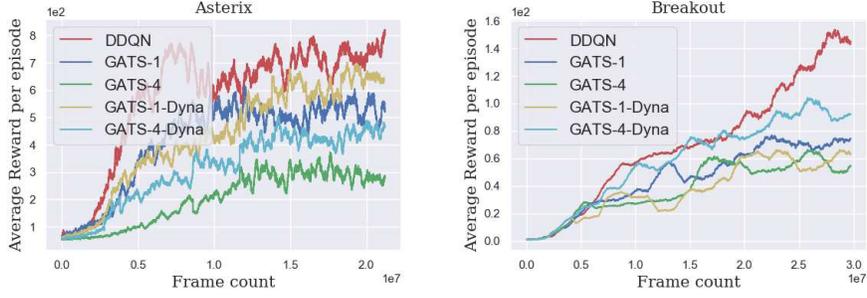

\centering
\includegraphics[scale=0.4]{Fig/Asterix_small.eps}
\includegraphics[scale=0.4]{Fig/Breakout_small.eps}
\caption{The \GATS algorithm on Asterix and Breakout with 1 and 4 step look-ahead, compared to the DDQN baseline.}
\label{fig:Asterix-1}
\end{figure}


\begin{figure}[ht]
\begin{center}
\begin{psfrags}
\psfrag{x0}{$x_t$}
\psfrag{x01}{$\wh{x}_1$}
\psfrag{x02}{$\wh{x}_2$}
\psfrag{x03}{$\wh{x}_3$}
\psfrag{x11}{$\wh{x}_4$}
\psfrag{x12}{$\wh{x}_5$}
\psfrag{x13}{$\wh{x}_6$}
\psfrag{x21}{$\wh{x}_7$}
\psfrag{x22}{$\wh{x}_8$}
\psfrag{x23}{$\wh{x}_9$}
\psfrag{x31}{$\wh{x}_{10}$}
\psfrag{x32}{$\wh{x}_{11}$}
\psfrag{x33}{$\wh{x}_{12}$}
\psfrag{a1}{$a_1$}
\psfrag{a2}{$a_2$}
\psfrag{a3}{$a_3$}
\psfrag{q1}{$Q(\wh{x}_4,a_Q({\wh{x}_4}))$}
\psfrag{q2}{$Q(\wh{x}_5,a_Q(\wh{x}_5))$}
\psfrag{q3}{$Q(\wh{x}_6,a_Q(\wh{x}_6))$}
\psfrag{q4}{$Q(\wh{x}_7,a_Q(\wh{x}_7))$}
\psfrag{q5}{$Q(\wh{x}_8,a_Q(\wh{x}_8))$}
\psfrag{q6}{$Q(\wh{x}_9,a_Q(\wh{x}_9))$}
\psfrag{q7}{$Q(\wh{x}_{10},a_Q(\wh{x}_{10}))$}
\psfrag{q8}{$Q(\wh{x}_{11},a_Q(\wh{x}_{11}))$}
\psfrag{q9}{$Q(\wh{x}_{12},a_Q(\wh{x}_{12}))$}
\includegraphics[width=10cm]{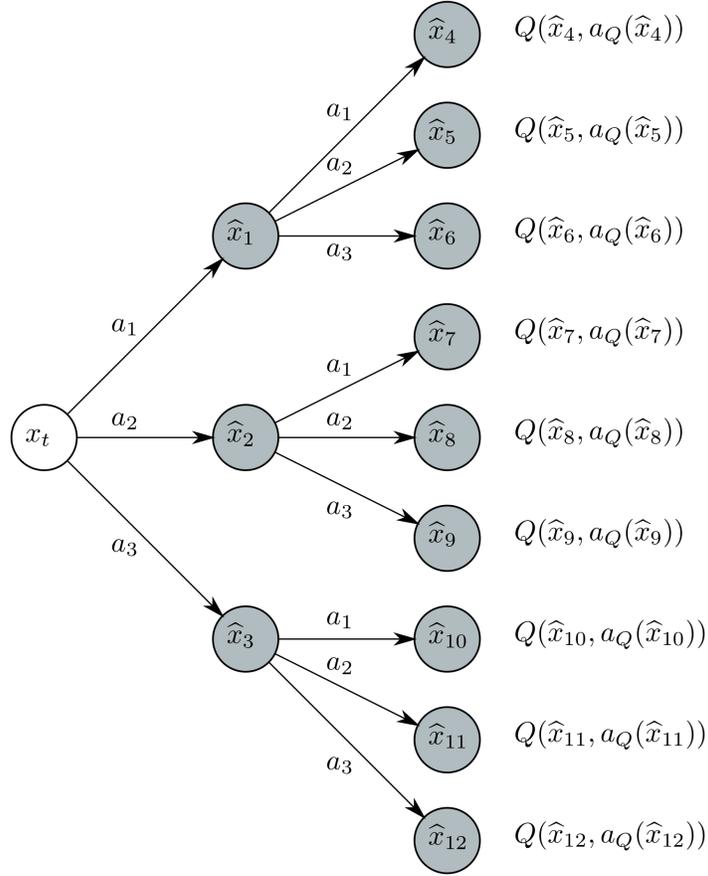}
\end{psfrags}
\end{center}
\caption{Roll-out of depth two starting from  state $x_t$. Here $\wh{x}$'s are the generated states by \GDM. $Q(x,a(x))$ denotes the predicted value of state $x$ choosing the greedy action $a_Q(x):=\arg\max_{a'\in\A}Q(x,a') $.}
  \label{fig:mcts}
\end{figure}

\begin{figure}[ht]
\begin{center}
\begin{psfrags}
\psfrag{x0}{$x_t$}
\psfrag{xt}{${x}_{1+1}$}
\psfrag{xtt}{${x}_{t+2}$}
\psfrag{x02}{$\wh{x}_2$}
\psfrag{x21}{$\wh{x}_7$}
\psfrag{a1}{$a_1$}
\psfrag{a2}{$a_2$}
\psfrag{a3}{$a_3$}
\includegraphics[width=6cm]{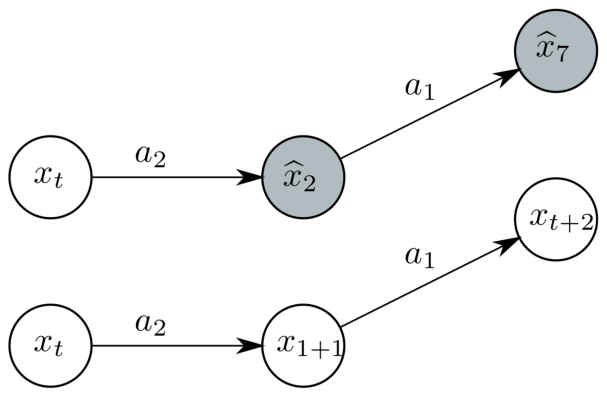}
\end{psfrags}
\end{center}
  \caption{Training GAN and $Q_{\theta'}$ using the longer trajectory of experiences}
  \label{fig:mcts-planning}
  \vspace*{-0.4cm}
\end{figure}

\section{\GDM Architecture and parameters}\label{apx:GDM}

For the generative dynamic model, 
we propose a generic \GDM which consists of a generator $G$ 
and a discriminator $D$, 
trained adversarially w.r.t. the extended conditional Wasserstein metric 
between two probability measures $\Prob_\varpi,\Prob_G$ 
under a third probability measure $\Prob$;
\begin{align}\label{eq:wasser}
W(\Prob_\varpi,\Prob_G;\Prob):=&\sup\!_{D\in \|\cdot\|_L}\E_{\varpi\sim \Prob_\varpi|\varrho, \varrho\sim \Prob}[D(\varpi|\varrho)]-\E_{\varpi: G(\varrho\sim \Prob,z\sim\N(0,I))}[D(\varpi|\varrho)]
\end{align}
Here, $z$ is a mean-zero unit-variance Gaussian random vector
and $\|\cdot\|_L$ indicates the space of $1$-Lipschitz functions. 
In \GDM, $D$ solves the interior $\sup$, 
while $G$'s objective is to minimize this distance 
and learn the $\Prob_\varpi|\varrho$ for all $\varrho$. 
In \GATS, $\Prob$ 
is the distribution over pairs of $\varrho:(x,a)$ in the replay buffer, 
and $\Prob_\varpi|\varrho$ is the distribution 
over the successor states $\varpi:x'$, 
as the transition kernel $T(x'|x,a)$.

The \GDM model consists of seven convolution and also seven deconvolution layers. Each convolution layer is followed by Batch Normalization layers and the leaky ReLU activation function with negative slope of $-0.2$. Also each deconvolution layer is followed by a Batch Normalization layer and the ReLU activation instead of leaky RELU. The encoder part of the network uses channel dimensions of $32,32,64,128,256,512,512$ and kernel sizes of $4,4,4,4,2,2,2$. The reverse is true for the decoder part. We concatenate the bottleneck and next 5 deconvolution layers with a random Gaussian noise of dimension 100, the action sequence, and also the corresponding layer in the encoder. The last layer of decoder is not concatenated. Fig.~\ref{fig:gdm_generator}. For the discriminator, instead of convolution, we use SN-convolution~\citep{miyato2018spectral} which ensures the Lipschitz constant of the discriminator is below 1. The discriminator consists of four SN-convolution layers followed by Batch Normalization layers and a leaky RELU activation with negative slope of $-0.2$. The number of channels increase as $64, 128, 256,16$ with kernel size of $8,4,4,3$, which is followed by two fully connected layers of size $400$ and $18$ where their inputs are concatenated with the action sequence. The output is a single number without any non-linearity. The action sequence uses one hot encoding representation. 

We train the generator using Adam optimizer with weight decay of $0.001$, learning rate of $0.0001$ and also $beta1,beta2 = 0.5,0.999$. For the discriminator, we use SGD optimizer with smaller learning rate of $0.00001$, momentum of $0.9$, and weight decay of $0.1$. Given the fact that we use Wasserstein metric for \GDM training, the followings are the generator and discriminator gradient updates: for a given  set of 5 frames and a action, sampled from the replay buffer, $(f_1,f_2,f_3,f_4,a_4,f_5)$ and a random Gaussian vector $z$:

Discriminator update:
\begin{align*}
\nabla_{\theta_D}\left[\frac{1}{m}\sum_{i=1}^mD_{\theta_D}(f_5,f_4,f_3,f_2,a_4)-\frac{1}{m}\sum_{i=1}^mD_{\theta_D}(G_{\theta_G}(f_4,f_3,f_2,f_1,a_4),f_4,f_3,f_2,a_4,z)\right]
\end{align*}

Generator update:
\begin{align*}
\nabla_{\theta_G}\left[-\frac{1}{m}\sum_{i=1}^mD_{\theta_D}(G_{\theta_G}(f_4,f_3,f_2,f_1,a_4,z),f_4,f_3,f_2,a_4)\right]
\end{align*}
where $\theta^{\GDM}=\lbrace\theta_G,\theta_D\rbrace$ are the generator parameters and discriminator parameters. In order to improve the quality of the generated frames, it is common to also add a class of multiple losses and capture different frequency aspects of the frames~\cite{isola2017image,oh2015action}. Therefore, we also add $10*L1+90*L2$ loss to the GAN loss in order to improve the training process. It is worth noting twh these losses are defined on the frames with pixel values in $[-1,1]$, therefore they are small but still able to help speed up the the learning. In order to be able to roll-out for a longer and preserve the \GDM quality, we also train the generator using self generated samples, i.e. given the sequence $(f_1,f_2,f_3,f_4,a_4,f_5,a_5,f_6,a_6,f_7,a_7,f_8)$, we also train the generator and discriminator on the generated samples of generator condition on its own generated samples for depth of three. This allows us to roll out for longer horizon of more than 10 and still preserve the \GDM accuracy. 

\begin{figure}[ht!]
\centering
\includegraphics[scale=0.3]{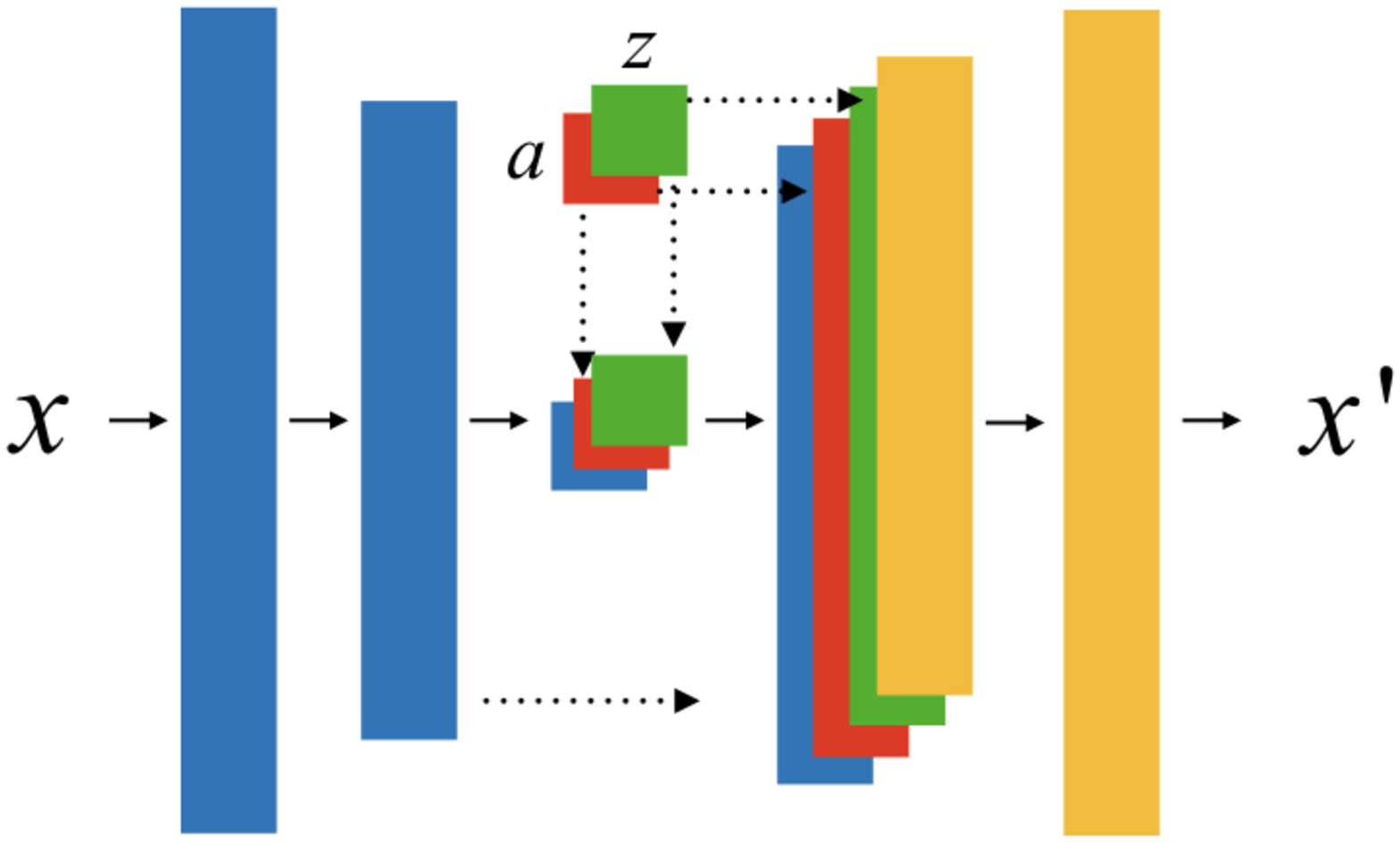}
\caption{The GDM generator is an encoder-decoder architecture with skip-connections between mirrored layers, with action and Gaussian noise concatenated in the bottleneck and decoder layers.}
\label{fig:gdm_generator}
\end{figure}

\begin{figure}
\centering
\hspace*{-1.1cm}
\begin{tabular}{cc}
\includegraphics[scale=0.5]{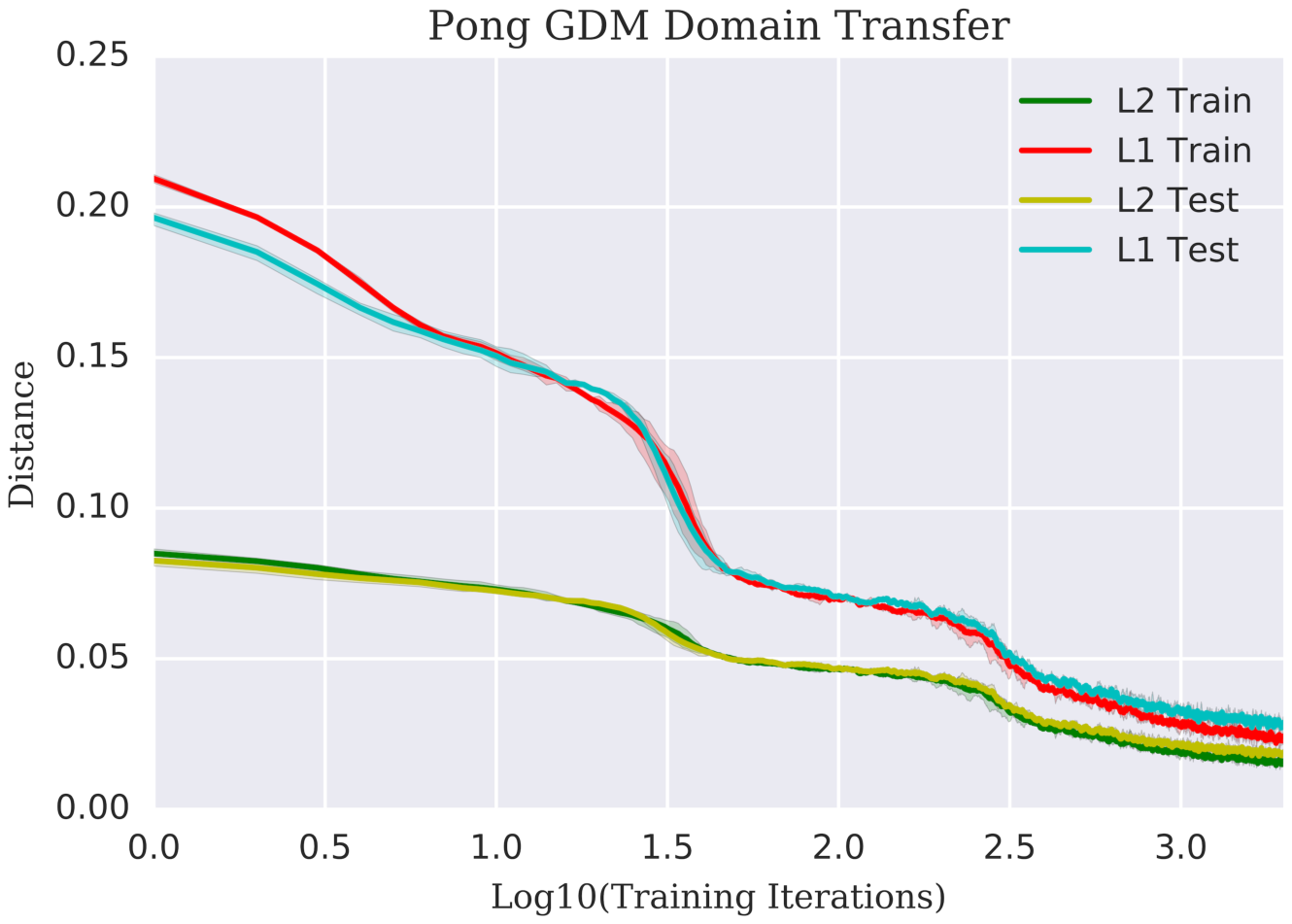}
&
\includegraphics[scale=0.5]{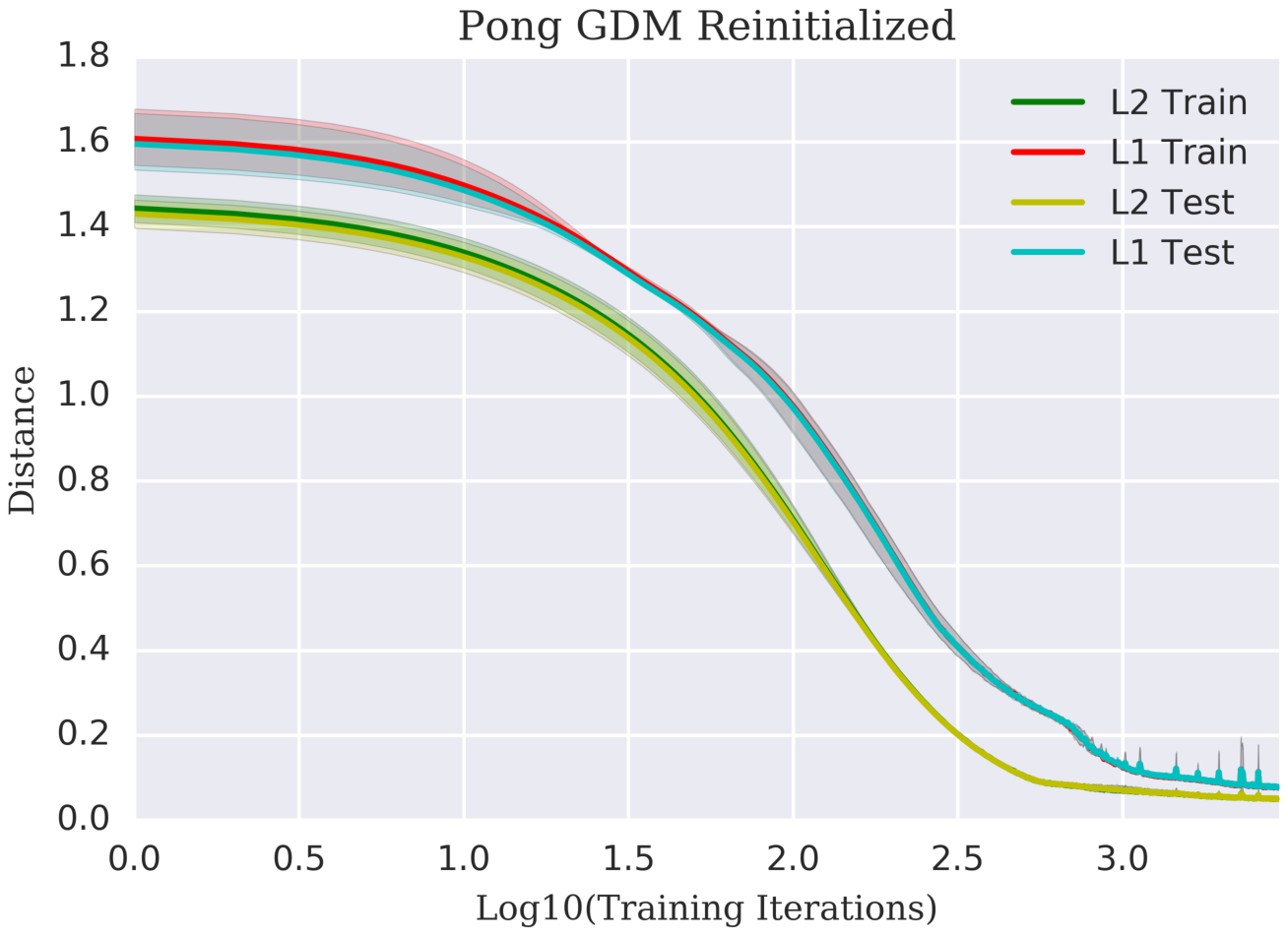}
\\
\multicolumn{2}{c}{\includegraphics[scale=0.06]{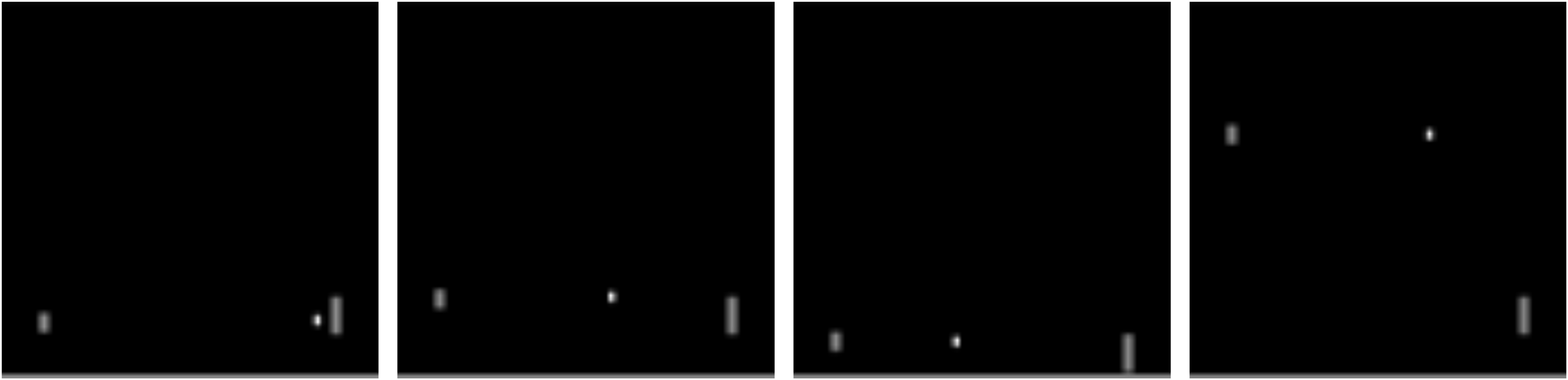} }
\\
\includegraphics[scale=0.06]{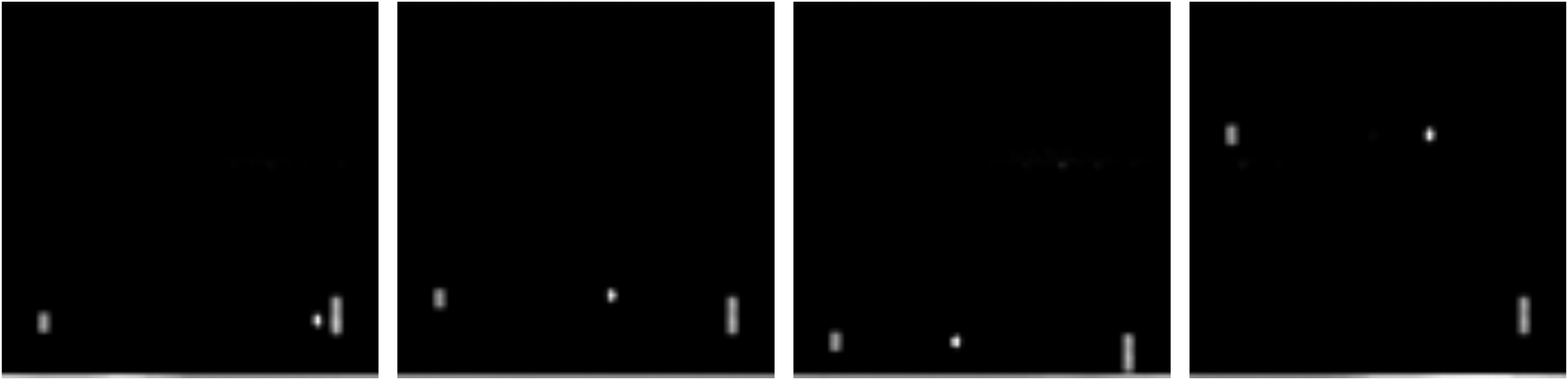} 
&
\includegraphics[scale=0.06]{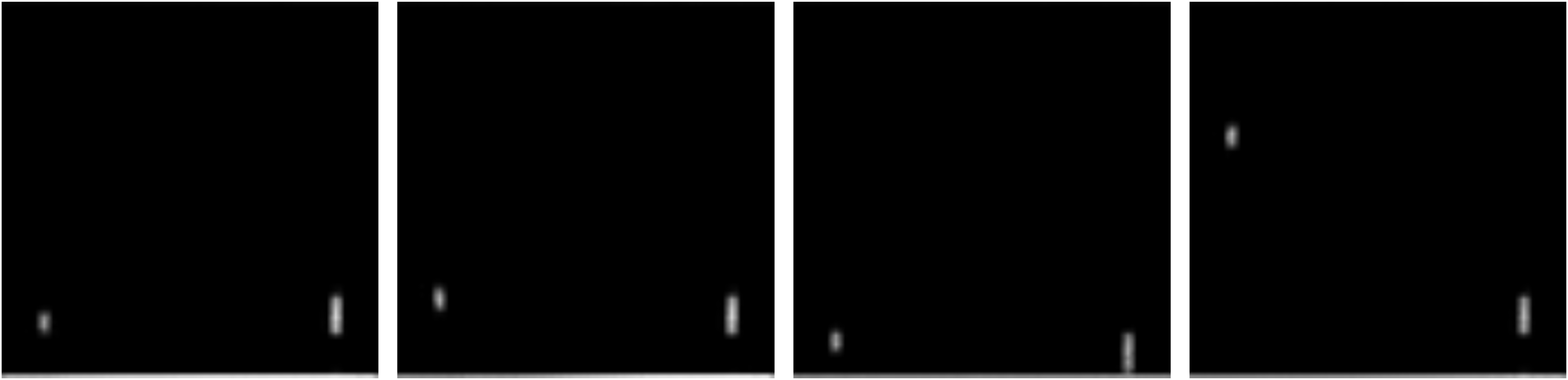}
\end{tabular}
\caption{Training and evaluating domain transfer for \GDM on new game dynamics for Pong (Mode 1, Difficulty 1). \GDM domain transfer from Pong (Mode 0, Difficulty 0) on \textit{left} and \GDM from re-initialized parameters on \textit{right}. L1 and L2 loss curves displayed \textit{top}. Ground truth next frames displayed \textit{middle} with predicted next frames displayed \textit{bottom}.}
\label{fig:gdm_curves}
\end{figure}

\paragraph*{Q function on generated frames} Ideally, if the \GDM model is perfect at generating frames 
i.e. the space generated frames is, pixel by pixel, the same as the real frames,  
for the leaf nodes $x_H$, 
we can use $\max_a Q(x_H,a;\theta)$, learned by the \DQN model on real frames, 
in order to assign values to the leaf nodes. 
But in practice, instead of $x_H$,
we have access to $\wh{x}_H$, 
a generated state twh perceptually is similar to $x_H$ (Fig.~\ref{fig:exp}),
but from the perspective of $Q_{\theta}$, 
they might not be similar over the course of training of $Q_{\theta}$. In order to compensate for this error, we train another Q-network, 
parameterized with $\theta'$, 
in order to provide the similar Q-value as $Q_{\theta}$ 
for generated frames.
To train $Q_{\theta'}$, we minimize the $L2$ norm between $Q_{\theta'}$ and $Q_{\theta}$ for a given GAN sample state and trajectory Fig.~\ref{fig:mcts-planning}. For this minimization, we use Adam with learning rate of $0.0001$, no weight decay, and $beta1,beta2 = 0.5,0.999$. We experimented with weight decay and adding $L1$ loss, but we find these optimizations degrade the performance of the network. We tracked the difference between $Q_{\theta}(\wh{x})-Q_{\theta}(x)$ and $Q_{\theta'}(\wh{x})-Q_{\theta}(x)$ and observed twh both of these quantities are negligible. We ran \GATS without the $Q{\theta'}$, with just $Q_{\theta}$, and observed only slightly worse performance.

\subsection{\GDM Domain Adaptation.}\label{apx:DA} We evaluate the \GDM's ability to perform domain adaptation using the environment mode and difficulty settings in the latest Arcade Learning Environment~\citep{machado2017revisiting}. We first fully train \GDM and \DDQN on Pong with Difficulty 0 and Mode 0. We then sample 10,000 frames for training the \GDM on Pong with Difficulty 1 and Mode 1, which has a smaller paddle and different game dynamics. We also collect 10,000 additional frames for testing the \GDM. We train \GDM using transferred weights and reinitialized weights on the new environment samples and observe the L1 and L2 loss on training and test samples over approximately 3,000 training iterations, and we observe twh they decrease together without significant over-fitting in Fig.~\ref{fig:gdm_curves}. To qualitatively evaluate these frames, we plot the next frame predictions of four test images in  Fig.~\ref{fig:gdm_curves}. We observe twh training \GDM from scratch converges to a similarly low L1 and L2 loss quickly, but it fails to capture the game dynamics of the ball. This indicates the L1 and L2 loss are bad measurements of a model's ability to capture game dynamics. \GDM is very efficient at transfer. It quickly learns the new model dynamics and is able to generalize to new test states with an order of magnitude fewer samples than the Q-learner.

\subsection{\GDM Tree Rollouts}\label{apx:losses}
Finally, we evaluate the ability of the \GDM to generate different future trajectories from an initial state. We sample an initial test state and random action sequences of length 5. We then unroll the \GDM for 5 steps from the initial state. We visualize the different rollout trajectories in Figs. \ref{fig:gdm_trees}\ref{fig:gdm_trees1}.

\begin{figure}[ht!]
\centering
\includegraphics[scale=.52]{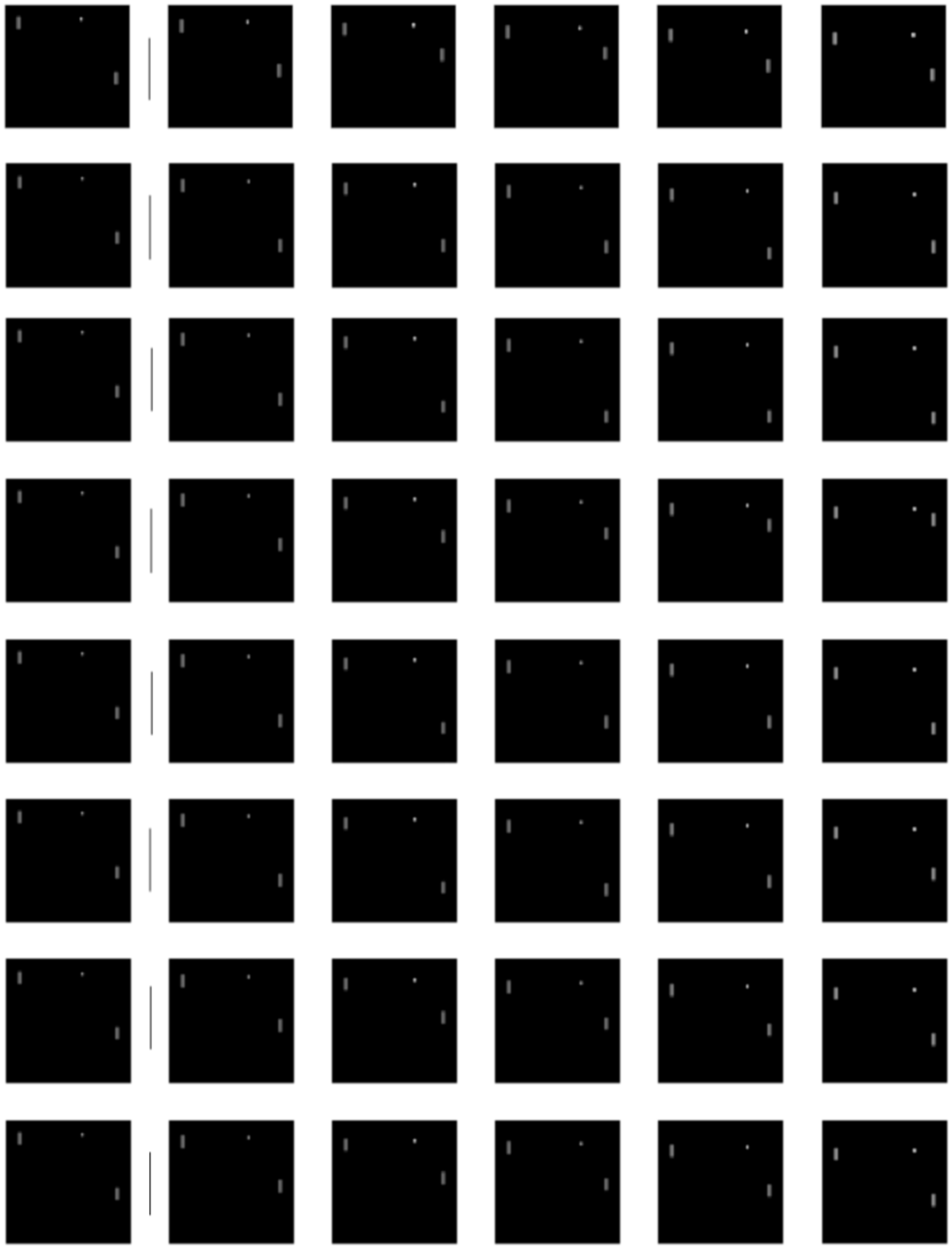}
\caption{Eight 5-step roll-outs of the \GDM on the Pong domain. Generated by sampling an initial state with 8 different 5-action length sequences.}
\label{fig:gdm_trees}
\end{figure}

\begin{figure}[ht!]
\centering
\includegraphics[scale=.52]{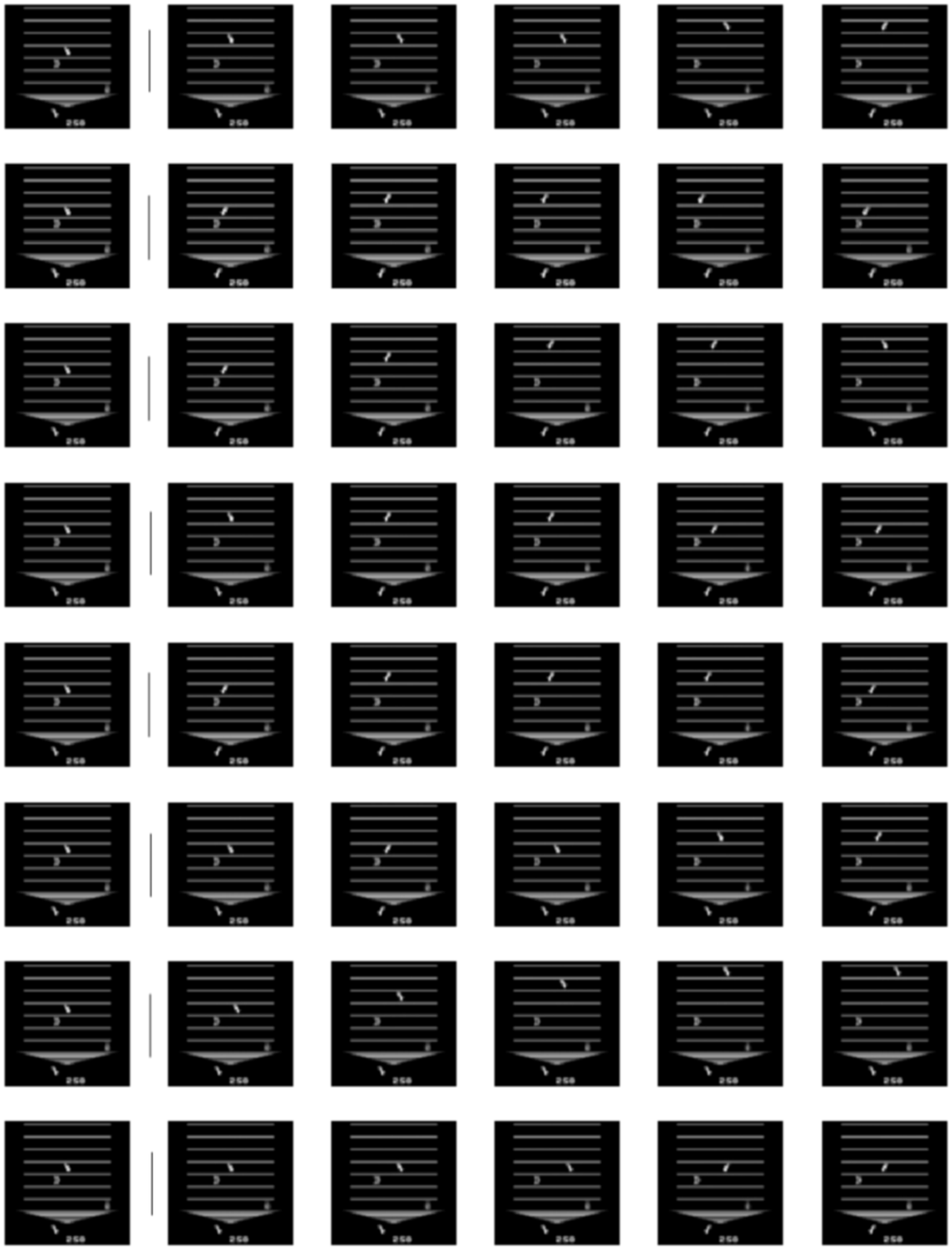}
\caption{Eight 5-step roll-outs of the \GDM on the Asterix domain. Generated by sampling an initial state with 8 different 5-action length sequences.}
\label{fig:gdm_trees1}
\end{figure}

\end{document}